\crefname{claim}{Claim}{Claims}
\crefname{assumption}{Assumption}{Assumptions}
\titleformat*{\paragraph}{\bfseries\itshape}
\declaretheoremstyle[
	    spaceabove=\topsep, 
	    spacebelow=\topsep, 
	    headfont=\normalfont\bfseries,
	    bodyfont=\normalfont\itshape,
	    notefont=\normalfont\bfseries,
	    notebraces={(}{)},
	    postheadspace=0.33em, 
	    headpunct={.},
    ]{theorem}
\declaretheorem[style=theorem]{theorem}
\declaretheoremstyle[
	    spaceabove=\topsep, 
	    spacebelow=\topsep, 
	    headfont=\normalfont\bfseries,
	    bodyfont=\normalfont,
	    notefont=\normalfont\bfseries,
	    notebraces={(}{)},
	    postheadspace=0.33em, 
	    headpunct={.},
    ]{definition}
\declaretheoremstyle[
        spaceabove=\topsep, 
        spacebelow=\topsep, 
        headfont=\normalfont\bfseries,
        bodyfont=\normalfont,
        notefont=\normalfont\bfseries,
        notebraces={}{},
        postheadspace=0.33em, 
        qed=$\blacksquare$, 
        headpunct={.},
    ]{proofstyle}
\declaretheorem[style=proofstyle,numbered=no,name=Proof]{proof}
\declaretheoremstyle[
	    spaceabove=\topsep, 
	    spacebelow=\topsep, 
	    headfont=\normalfont\bfseries,
	    bodyfont=\normalfont\itshape,
	    notefont=\normalfont\bfseries,
	    notebraces={(}{)},
	    postheadspace=0.33em, 
	    headpunct={.},
    ]{theorem}
\declaretheorem[style=theorem,name=Lemma]{lemma}
\declaretheorem[style=theorem,numbered=no,name=Lemma]{lemma*}
\declaretheorem[style=theorem,numbered=no,name=Corollary]{corollary*}
\declaretheorem[style=theorem,numbered=no,name=Proposition]{proposition*}
\declaretheorem[style=theorem,numbered=no,name=Claim]{claim*}
\declaretheorem[style=theorem,numbered=no,name=Fact]{fact*}
\declaretheorem[style=theorem,numbered=no,name=Observation]{observation*}
\declaretheorem[style=theorem,numbered=no,name=Conjecture]{conjecture*}
\declaretheorem[style=definition,name=Definition]{definition}
\declaretheorem[style=definition,numbered=no,name=Definition]{definition*}
\declaretheorem[style=definition,numbered=no,name=Assumption]{assumption*}
\declaretheorem[style=definition,numbered=no,name=Remark]{remark*}
\declaretheorem[style=definition,numbered=no,name=Example]{example*}
\declaretheorem[style=definition,numbered=no,name=Question]{question*}
\newcommand{\samethanks}[1][\value{footnote}]{\footnotemark[#1]}
\setlist{
    itemsep=0.0ex,
    topsep=0.5ex,
    parsep=0.5ex
}
\newcommand{\fsmax}{F_S^{\rad}}
\newcommand{\rad}{r}
\renewenvironment{proof}[1][]{%
  \par\noindent\textbf{\upshape Proof%
  \if\relax\detokenize{#1}\relax\else\ #1\fi}\ %
}{\jmlrQED}
\crefname{subsub}{Appendix}{Appendix}
\declaretheoremstyle[ 
        spaceabove=\topsep, 
        spacebelow=\topsep, 
        headfont=\normalfont\itshape,
        bodyfont=\normalfont,
        notefont=\normalfont\itshape,
        notebraces={}{},
        postheadspace=0.33em, 
        qed=$\square$, 
        headpunct={.},
    ]{proofstyle}
\declaretheorem[style=proofstyle,numbered=no,name=Proof]{proof}
\crefname{claim}{Claim}{Claims}
\crefname{assumption}{Assumption}{Assumptions}
\titleformat*{\paragraph}{\bfseries\itshape}
\newcommand{\R}{\mathbb{R}}
\crefname{claim}{Claim}{Claims}
\crefname{assumption}{Assumption}{Assumptions}
\title{Flat Minima and Generalization: \\ Insights from Stochastic Convex Optimization}
\author{%
    Matan Schliserman\thanks{\emph{Equal contribution}.}\,\,\thanks{Blavatnik School of Computer Science and AI, Tel Aviv University; \texttt{schliserman,shirav@mail.tau.ac.il}.}
    \and%
    Shira Vansover-Hager\samethanks[1] \samethanks[2]
    \and%
    Tomer Koren\thanks{Blavatnik School of Computer Science and AI, Tel Aviv University, and Google Research; \texttt{tkoren@tauex.tau.ac.il}.}
}
\begin{document}

\maketitle

\begin{abstract}
Understanding the generalization behavior of learning algorithms is a central goal of learning theory. A recently emerging explanation is that learning algorithms are successful in practice because they converge to flat minima, which have been consistently associated with improved generalization performance. 
In this work, we study the link between flat minima and generalization in the canonical setting of stochastic convex optimization with a non-negative, $\beta$-smooth objective.
Our first finding is that, even in this fundamental and well-studied setting, flat empirical minima may incur trivial $\Omega(1)$ population risk while sharp minima generalizes optimally.
Then, we show that this poor generalization behavior extends to two natural ''sharpness-aware'' algorithms originally proposed by \citet{foret2021sharpness}, designed to bias optimization toward flat solutions: Sharpness-Aware Gradient Descent (SA-GD)
and Sharpness-Aware Minimization (SAM).
For SA-GD, which performs gradient steps on the maximal loss in a predefined neighborhood, we prove that while it successfully converges to a flat minimum at a fast rate, the population risk of the solution can still be as large as~$\Omega(1)$, indicating that even flat minima found algorithmically using a sharpness-aware gradient method might generalize poorly.
For SAM, a computationally efficient approximation of SA-GD based on normalized ascent steps, we show that although it minimizes the empirical loss, it may converge to a sharp minimum and also incur population risk $\Omega(1)$. 
Finally, we establish population risk upper bounds for both SA-GD and SAM using algorithmic stability techniques.
\end{abstract}

\section{Introduction}
\label{sec:intro}
Understanding the generalization behavior of modern learning algorithms has become a central focus of theoretical machine learning. This interest is motivated by the observation that in heavily overparameterized models, the training objective admits many global optima that perfectly fit the data \citep{zhang2017understanding}; yet, while some of these minimizers generalize poorly, others---typically those to which common optimization algorithms converge---generalize well~\citep{neyshabur2014search, neyshabur2017exploring,zhang2017understanding}. These observations naturally raise the fundamental question of what theoretical and algorithmic conditions ensure that minimizers generalize well.

One prominent condition that has received significant attention is the \emph{flatness} of the minimum. Flat minima, solutions that that remain (approximate) minimizers under small parameter perturbations, have been consistently associated with better generalization, while sharper, non-flat minima are linked with worse out-of-sample performance~\citep{keskar2016large,dziugaite2018entropy,jiang2019fantastic,singh2025avoiding}. This insight has motivated a variety of methods that encourage solutions in flat regions of the loss landscape, rather than sharp ones~\citep{wu2020adversarial,foret2021sharpness,kwon2021asam,zheng2021regularizing,du2021efficient,kim2022fisher,zhuang2022surrogate,liu2022towards,du2022sharpness,zhao2022randomized,andriushchenko2023sharpness,li2023enhancing,jiang2023adaptive, xie2024sampa,tahmasebi2024universal,li2024friendly}.
In particular, \citet{foret2021sharpness} introduced the Sharpness-Aware Minimization (SAM) approach, which reformulates the standard optimization problem as minimizing the \emph{Sharpness-Aware Empirical Risk (SAER)}, defined as 
$
    \fsmax(w) =
    \max_{\|v\| \leq \rad} F_S(w + v)
$
where $F_S$ is the empirical risk over a sample $S$ and $\rad$ is a \emph{perturbation radius} parameter. This approach encourages solutions robust to parameter perturbations, thus corresponding to flatter minima.

Despite the success of SAM, as well as of other sharpness-aware methods~\citep{bahri2021sharpness,chen2021vision,foret2021sharpness,kaddour2022flat,lee2023plastic}, 
the theoretical link between flatness and generalization remains not fully understood. 
While some works show that in certain non-convex regimes the flatness 
of an arbitrary minimizer does not affect generalization~\citep[e.g.,][]{dinh2017sharp,wen2023sharpness}, it is unclear whether this also holds for concrete optimization methods that explicitly aim to find flat minima. For such methods, existing analyses either provide only empirical evidence 
\citep{andriushchenko2023modern,wen2023sharpness,ramasinghe2023much}, establish problem parameters-dependent generalization 
bounds~\citep{neyshabur2017exploring,wei2019data,wei2019improved,foret2021sharpness,norton2023diametrical}, or restrict attention to quadratic or strongly convex objectives~\citep{chen2024does,tan2025stabilizing}. 
As a result, it remains unclear whether and under which conditions finding a flat empirical minimum using such 
algorithms does in fact lead to improved generalization, or how the generalization guarantees of these practical methods compare to those of standard optimization algorithms such as gradient descent (GD) and stochastic gradient descent (SGD).

In this paper, we aim to gain insight into the relationship between flatness and generalization by studying the above questions within the framework of Stochastic Convex Optimization (SCO): a fundamental and extensively studied theoretical model widely used to analyze stochastic optimization algorithms. 
SCO is particularly well-suited for such a study, as it is well-known that SCO problems can admit multiple empirical minimizers, not all of which are guaranteed to generalize well~\citep{shalev2010learnability,feldman2016generalization}.
We focus on the regime where the loss functions are non-negative and $\beta$-smooth;%
\footnote{A differentiable function $f: \mathbb{R}^d \to \mathbb{R}$ is $\beta$-smooth if $\|\nabla f(v) - \nabla f(u)\|_{2} \leq \beta \|v - u\|_2$ for all $u, v \in \mathbb{R}^d$.} 
in this setting, gradient methods such as GD and SGD are known to generalize optimally~\citep{hardt2016train,nikolakakis2022beyond}, as opposed to the wider convex non-smooth setting~\citep{amir2021sgd, schliserman2024dimension,livni2024sample,vansover2025rapid}.  
Within this SCO framework, we impose the additional assumption that $f$ admits at least one flat minimum, i.e., a minimizer such that the loss remains constant within a ball of radius $\rho$ around it (we call such a minimizer a \emph{$\rho$-flat minimum}). %
To capture this formally, we introduce a strong flatness condition (see \cref{ass:flat}), and analyze the generalization performance of several natural algorithms under this condition.

Our contributions shed light on the extent to which flatness relates to generalization in SCO.
We construct examples showing that flat empirical minima can generalize poorly, demonstrating that minimizing the Sharpness-Aware Empirical Risk (SAER) does not in itself guarantee good generalization.
First, we present an SCO instance in which there exists a flat empirical risk minimizer (ERM) that generalizes poorly, while within the same setting, there exists a sharp ERM that generalizes well.
Then, we show that this poor generalization behavior extends to two natural ``sharpness-aware'' algorithms originally proposed by \citet{foret2021sharpness}, designed to bias optimization toward flat solutions: Sharpness-Aware Gradient Descent (SA-GD)%
\footnote{This algorithm was introduced in~\citep{foret2021sharpness} without being explicitly named, referred here as SA-GD for conciseness.}
and Sharpness-Aware Minimization (SAM). 
For SA-GD, we prove that it indeed converges to a flat minimum, however, there are instances where it converges to solutions that generalize strictly worse compared to those found by standard GD and SGD, which are known to generalize optimally in the same setting. These results indicate that even flat minima found algorithmically using a sharpness-aware gradient method might generalize poorly.
For SAM we observe a sharper contrast: although it minimizes the empirical risk, it does not necessarily minimize sharpness as it may converge to a non-flat minimum, and similarly to SA-GD, we show it might converge to minima with poor generalization compared to (S)GD. 
These results provide insight into possible limitations of sharpness-aware approaches in terms of the flatness of the solution found and its out-of-sample performance relative to (S)GD. Finally, we derive new population loss upper bounds for SA-GD and SAM. Compared to (S)GD, these bounds include an additional dominant term that nearly matches our lower bounds. 

\subsection{Summary of contributions}

In more detail, we make the following technical contributions. 
(The bounds presented below describe the dependence on the number of iterations $T$, number of training examples $n$, step size $\eta$, smoothness parameter $\beta$, flatness radius of the loss minimizer $\rho$, and perturbation size $\rad$.)

\begin{enumerate}[label=(\roman*),leftmargin=!,parsep=0pt,topsep=0pt]
\item We introduce a strong flatness condition assuming the existence of a perfectly flat minimum of radius~$\rho$. %
For Sharpness-Aware ERM (SA-ERM), even under this strong condition, we construct a smooth SCO problem where the empirical risk admits a flat minimizer with population risk~$\Omega(1)$, while a non-flat minimizer achieves optimal generalization (\cref{thm: bad flat erm}).

\item 
For the SA-GD algorithm~\citep{foret2021sharpness}, 
we prove an empirical optimization bound $O(\ifrac{1}{\eta T} + \max(\rad-\rho,0)^2)$, implying that with $\eta\simeq 1/\beta$ and $\rad\simeq\rho$, SA-GD converges to a $\Theta(\rho)$-flat minimum at rate $O(1/T)$. 
In contrast, we establish a lower bound of $\Omega(\eta^2(\rad-\rho)^2T)$ on the population loss of SA-GD for $\rad\gtrsim\rho$, showing that SA-GD may generalize poorly even when converging to flat minima. In particular, tuning the algorithm with $\eta\simeq 1/\beta$ and $\rad\gtrsim\rho+1/\sqrt{T}$ can lead to a population risk of $\Omega(1)$ (\cref{thm:max_opt,thm: gen max sam}). Finally, using algorithmic stability, we prove a population upper bound for SA-GD under $\rho$-flatness (\cref{thm:max_gen_upper}). 
This bound nearly matches our lower bound, and compared to vanilla GD and SGD it contains an additional dominant term  $O(\eta^2 \rad^2 T)$.

\item For SAM~\citep{foret2021sharpness}, we obtain the same bound $O(1/(\eta T) + \max(\rad-\rho,0)^2)$ for the empirical risk, but also show a convex, smooth case where SAM converges to a sharp minimum, i.e., it fails to minimize the SAER. As for generalization, we establish a population lower bound of $\Omega(\eta^2\rad^2T)$ in the case $\rho = 0$, which implies a trivial risk of $\Omega(1)$ when $\eta\simeq1/\beta$ and $\rad\gtrsim1/\sqrt{T}$, or when $\eta\simeq1/\sqrt{T}$ and $\rad=\Theta(1)$, regimes where SAM minimizes the empirical risk (\cref{thm:asc_opt_upper,thm:ascent_opt_non_flate,thm: gen asc sam}). As with SA-GD, we prove a population upper bound for SAM under $\rho$-flatness, achieving the same rate as SA-GD (\cref{thm:ascent_gen_upper}).

\end{enumerate}

To our knowledge, these results are the first to formally address the connection between flatness and generalization in the convex regime, and they bear some interesting implications. 
On the positive side, they provide the first indication that sharpness-aware methods converge at a dimension-independent fast $O(1/T)$ rate in terms of empirical risk for general convex optimization, despite the SAER objective being non-smooth, and this convergence can further benefit from flatness of the objective. %
On the negative side, our results show that even in the basic convex and smooth regime, a sharp empirical minimum may generalize better than a flat one, and this can occur when the flat empirical risk minimizer is selected arbitrarily, e.g., by the SA-ERM algorithm, or algorithmically, by the SA-GD algorithm. Furthermore, our findings highlight that optimization methods explicitly designed to locate flat minima, such as SA-GD and SAM, may converge to solutions that generalize poorly. In contrast, standard gradient-based methods like GD and SGD are known to achieve optimal generalization in this setting when using the optimization-optimal step size $\eta \simeq 1/\beta$ \citep{lei2020fine,nikolakakis2022beyond}.

\subsection{Related work}
\label{sec:related} 
\paragraph{Flat minima and generalization.}

The conjectured connection between flat minima and generalization dates back to \citet{hochreiter1997flat}. Since then, a large body of empirical and theoretical work has suggested that flatter minima correlate with, or even guarantee, better generalization performance \cite{keskar2016large,dziugaite2017computing,neyshabur2017exploring,wu2018sgd,Jastrzebski2017minima,jiang2019fantastic,wei2019data,wei2019improved,blanc2020implicit,haochen2021shape,foret2021sharpness,damian2021label,li2021happens,ma2021linear,nacson2022implicit,wei2022statistically,lyu2022understanding,norton2023diametrical,wu2023implicit,ding2024flat}. However, several works caution against interpreting flatness as a universal predictor of generalization \citep{dinh2017sharp,andriushchenko2023modern,wen2023sharpness,ramasinghe2023much}. Notably, from a theoretical perspective, \citet{dinh2017sharp} showed that in ReLU networks sharpness can be arbitrarily altered through reparameterization without affecting the learned function or its generalization, implying that common flatness measures are not parameterization-invariant and may therefore be misleading. 

More recently, \citet{wen2023sharpness} examined two-layer ReLU networks defining flatness as the trace of the Hessian. Using this architecture and notion of flatness they identified scenarios where flat minima fail to generalize, while sharpness-minimization algorithms such as SAM may still succeed, although their analysis of SAM was only empirical. Our results go beyond both works: unlike \cite{dinh2017sharp}, we give explicit constructions where flat minimizers fail while sharp minimizers generalize perfectly, directly challenging the conjecture itself, and unlike \cite{wen2023sharpness}, we establish this phenomenon already in the fundamental convex $\beta$-smooth setting and under much stronger flatness assumptions. Furthermore we provide theoretically provable lower bounds on the generalization of SAM, offering a more rigorous understanding of its limitations.

\paragraph{Convergence rates of SAM.}
Many works on the convergence of SAM analyze a variant of SAM that does not use gradient normalization during the ascent step \citep{andriushchenko2022towards,behdin2023sharpness,agarwala2023sam,kim2023stability}. 
This variant does not match practical implementations of SAM, where normalization is typically used \citep{si2023practical}, and more recent work showed that normalization improves SAM’s performance \citep{dai2024crucial}. 
Our work considers SAM with normalization and provides more practical bounds. 
Another line of research studies the implicit bias of SAM and its variants under infinitesimal step sizes \citep{wen2022does,andriushchenko2022towards}, while we focus on the practical discrete setting. 

In more specific cases, \citet{bartlett2023dynamics} gave convergence rates for SAM on convex quadratics, whereas our work addresses general smooth convex objectives. 
Recent works also consider smooth nonconvex objectives with decaying or sufficiently small $\rad$ \citep{mi2022make,zhuang2022surrogate,sun2024adasam}, but such assumptions differ from practice, where $\rad$ might be a constant. 
Our bounds instead cover smooth convex functions and hold for any $\rad$, including large values. 
Finally, \citet{si2023practical} derived convergence guarantees in deterministic and stochastic regimes, but in the smooth convex case they only proved convergence to stationary points, leaving convergence to global minima as an open problem. 
We close this gap by providing the first rates of convergence to global minima for SAM on general smooth convex objectives, and we are the first to incorporate the true flatness of the objective into the convergence analysis.  

\paragraph{Generalization of SAM.}  
\citet{foret2021sharpness}, who originally introduced SAM, established PAC-Bayes bounds to explain its generalization. 
These bounds are dimension dependent and may be vacuous in many scenarios. 
More recently, \citet{tan2025stabilizing} analyzed the smooth and strongly convex setting, comparing the algorithmic stability of SAM and SGD.  
\citet{chen2024does} studied generalization from a different angle, comparing the conditions for benign overfitting under SGD and SAM in two-layer convolutional ReLU networks. 
In contrast to these works, we establish the first dimension-independent generalization bounds for the broad class of smooth convex (but not strongly convex) objectives, together with the first lower bounds on the generalization performance of SAM in this setting.

\paragraph{Generalization in SCO.}
Stochastic convex optimization is a fundamental theoretical framework for analyzing widely used optimization algorithms, where the loss function is assumed to be convex and Lipschitz. In this setting, prior work \citep{shalev2010learnability,feldman2016generalization,carmon2023sample} have shown that, although learning in this framework is possible (e.g., via Stochastic Gradient Descent), empirical risk minimization (ERM) may fail (even under additional assumptions such as smoothness and realizability), since uniform convergence does not generally hold. 
In our work, we focus on flat ERMs, namely minimizers of the SAER, and demonstrate that even when the minima are flat, they may still generalize poorly.
Beyond ERM, several natural algorithms such as full-batch Gradient Descent and multi-pass Stochastic Gradient Descent have also been shown to fail in this setting \citep{amir2021sgd,livni2024sample,schliserman2024dimension,vansover2025rapid}.
All of these works focus on the non-smooth regime and establish lower bounds in that setting. In contrast, our work studies the generalization of Sharpness-Aware Minimization algorithms in smooth and realizable SCO, and we show that even under these strong assumptions, SA-GD and SAM may still generalize poorly.

\paragraph{Smooth SCO with low noise.}
The problem of smooth stochastic convex optimization with low noise as been extensively studied. \cite{srebro2010smoothness} established that Stochastic Gradient Descent (SGD) attains a risk bound of $O\left(\ifrac{1}{n}\right)$ in this setting. This result was recently extended by \cite{attia2025fast} to the last iterate of SGD. 
In our work, we demonstrate that in the deterministic setting, SA-GD and SAM also attain these optimal rates when applied to smooth loss functions. In addition, for SA-GD we prove an even stronger result: under an additional flatness condition, the method achieves the same fast rates for convergence with respect to the SAER $\fsmax$, a function that is generally non-smooth.
From a generalization perspective, recent work \citep{lei2020fine, nikolakakis2022beyond, schliserman22a, evron2025better, attia2025fast} has used stability arguments to show that gradient methods such as GD and SGD, both with and without replacement and with $T=n$, achieve an optimal risk of $O(1/n)$ in this setting. Our work shows that, in contrast to those algorithms, SA-GD and SAM may generalize poorly, even in smooth and realizable SCO.

\section{Problem setup}

We study the generalization properties of flat minima in the framework of (smooth) \emph{Stochastic Convex Optimization} (SCO). 
In this setting, there exists a population distribution $\mathcal{D}$ over an instance space $\mathcal{Z}$, and a loss function 
$
f : W \times \mathcal{Z} \to \mathbb{R}
$
defined on a convex domain $W \subseteq \mathbb{R}^d$. 
For any fixed instance $z \in \mathcal{Z}$, the function $f(\cdot, z)$ is assumed to be non-negative, convex, and $\beta$-smooth ($\beta>0$) with respect to its first argument $w$.
The learning goal is to minimize the \emph{population risk}, defined as the expected loss over $\mathcal{D}$,
\begin{equation}
\label{def:population_risk}
    F(w) \coloneqq \mathbb{E}_{z \sim \mathcal{D}}[f(w,z)].
\end{equation}
Since $\mathcal{D}$ is unknown, learning algorithms instead use a finite i.i.d.\ sample $S = \{z_1, \ldots, z_n\}$ drawn from $\mathcal{D}$. 
A common approach is to minimize the \emph{empirical risk} over $S$, given by
\begin{equation}
\label{def:empirical_risk}
\textstyle
    F_S(w) \coloneqq \frac{1}{n}\sum_{i=1}^n f(w,z_i).
\end{equation}
A main focus of this paper is on objective functions that admit \emph{flat minima}, formalized as follows.
\begin{definition}[$\rho$-flatness]
\label{ass:flat} \upshape
We say that $w^\star \in \mathbb{R}^d$ is a \emph{$\rho$-flat minimum} (for $\rho \geq 0$) of a non-negative function $f:\R^d \to \R$ if for every $w \in \R^d$ with $\|w - w^\star\| \leq \rho$, it holds that $f(w) = 0$.  
If such a $\rho$-flat minimum exists for $f$, we also say that $f$ is $\rho$-flat; the maximal $\rho$ satisfying this condition is called the \emph{flatness radius} of $f$. 
\end{definition}
Note that this is a very strong notion of flatness: it in particular implies that the empirical minimization problem with a $\rho$-flat $F_S$ is \emph{realizable} (i.e., there exists $w^\star$ such that $f(w^\star,z_i)=0$ for almost all $z_i \in S$) and further that $F_S$ is \emph{perfectly flat} in a neighborhood of $w^\star$.
Since our goal is to understand the relationship between flatness and generalization, we find it more informative to analyze this connection under the most stringent and unambiguous condition of flatness. In particular, imposing such a condition makes any negative results (i.e., lower bounds) only \emph{stronger}, since they hold even under the most favorable notion of flatness.

With the above notion of flatness in mind, we focus on three natural algorithms:

\begin{itemize}[leftmargin=*,parsep=0pt,topsep=0pt]
\item 
\emph{\bfseries Sharpness-Aware Empirical Risk Minimization (SA-ERM).}
The first (meta-)algorithm is a natural, ``Sharpness-Aware'' variant of ERM that computes, given a parameter $\rad>0$: 
\begin{equation} \label{erm_update_rule} 
\begin{alignedat}{2}
    &w_S  \in \arg\min_{w \in W} \, \fsmax(w),
    \\ &\text{where}\quad
    \fsmax(w)  = \max_{v :~ \|v\|\leq \rad} F_S(w+v).
\end{alignedat}
\end{equation}
Namely, it outputs a minimizer of the \emph{sharpness-aware empirical risk} (SAER) with radius $\rad$, which we denote by $\fsmax$. 
The idea here is that, if the empirical risk $F_S$ is $\rho$-flat and $\rad \leq \rho$, then \emph{any} minimizer of the SAER is also a $\rad$-flat minimum of the original empirical risk $F_S$.

\item 
\emph{\bfseries Sharpness-Aware Gradient Descent (SA-GD).}
The second algorithm is a first-order instantiation of SA-ERM, proposed in \citep{foret2021sharpness}, obtained by running gradient descent on the SAER objective.
Starting from $w_1 \in W$ and given parameters $\eta,\rad>0$, it takes steps for $t=1,\ldots,T$ of the form:
\begin{equation} \label{gd_update_rule} 
\begin{alignedat}{2}
     &w_{t+1}  = w_t - \eta \nabla F_S(w_t+v_t), 
    \\ &\text{where}\quad
    v_t  \in \argmax_{v :~ \|v\|\leq \rad} F_S(w_t+v).
\end{alignedat}
\end{equation}

\item 
\emph{\bfseries Sharpness-Aware Minimization (SAM).}
The third algorithm is the original SAM algorithm proposed in  \cite{foret2021sharpness} as a computationally efficient approximation of SA-GD. SAM circumvents the explicit maximization over $v$ in \cref{gd_update_rule} by replacing $v_t$ with the normalized gradient at $w_t$.
Thus, starting from $w_1 \in W$ and given $\eta,\rad>0$, the updates of SAM for $t=1,\ldots, T$ take the form
\begin{equation} \label{asc_update_rule}  
    w_{t+1} = w_t - \eta \nabla F_S\!\left(w_t + \rad \frac{\nabla F_S(w_t)}{\|\nabla F_S(w_t)\|}\right)
    .
\end{equation}

\end{itemize}
\paragraph{Notations.} We denote by $\|\cdot\|$ the $\ell_2$ norm. 
The symbol $\odot$ represents element-wise multiplication, i.e.,
$(x \odot y)(i) = x(i)\, y(i).$
Finally, we write $[x]_+$ for the element-wise ReLU function, defined as $[x]_+(i) = \max\{x(i), 0\}$.

\section{SA-ERM: Generic flat minima}\label{sec:sa_erm}
We begin by establishing a lower bound on the generalization performance of SA-ERM.
In particular, we construct an SCO instance where, with constant probability, there exists a minimizer of the SAER with a trivial $\Omega(1)$ population risk. This result illustrates not only the limitations of the SA-ERM algorithm in the general smooth SCO setting but also how the loss landscape affects generalization.
The result is formalized in the following theorem.
\begin{theorem}\label{thm: bad flat erm}
For every $n \in \mathbb{N}$ and $0 \leq \rho \leq \tfrac{1}{2}$, let $d = 2^n + 1$ and define 
$W = \{x \in \mathbb{R}^d : \|x\| \leq 1\}$. Then there exist an instance set $\mathcal{Z}$, a distribution $\mathcal{D}$ over $\mathcal{Z}$, and a loss function 
$f : W \times \mathcal{Z} \to \mathbb{R}$ that is convex, $1$-Lipschitz, $1$-smooth and $\rho$-flat, such that with probability at least $\tfrac{1}{2}$ over the training set $S$, there exist 
$w^{(1)}, w^{(2)} \in \arg\min_{w \in W} F_S(w)$ satisfying:
\begin{enumerate}[label=(\roman*),leftmargin=*,parsep=0pt,topsep=0pt]
    \item 
    for every $\rad \geq 0$, it holds that $w^{(1)} \in \arg\min_{w \in W} F_S^{\rad}(w)$. In particular, if $\rad \leq \rho$ then $w^{(1)}$ is an $\rad$-flat minimum of $F_S$; 
    \item 
    $w^{(2)}$ is a sharp minimum, in the sense that  $F_S^{\delta}(w^{(2)}) \geq F_S(w^{(2)}) + \frac{1}{2}\delta^2$ for all $\delta > 0$.%
    \footnote{This condition means that in every neighborhood of the minimizer there exists a point with large $F_S$. The inequality is the tightest possible: due to $1$-smoothness, any minimizer $w^\star$ of $F_S$ satisfies $F_S^{\delta}(w^\star) \leq F_S(w^\star) + \tfrac{1}{2}\delta^2$ for all $\delta > 0$.}
    \item 
    we have $F(w^{(1)}) - F(w^\star) = \Omega(1), \text{ while } F(w^{(2)}) - F(w^\star) = 0.$
\end{enumerate}
\end{theorem}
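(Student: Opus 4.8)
The plan is to build an SCO instance of the ``memorization'' type: a distribution over $n$ atoms for which, with constant probability, the sample misses a constant fraction of the atoms, together with a loss whose exponentially many coordinates let a single, sample-dependent direction fit all the observed atoms while ignoring the missing ones. I would identify $\R^d$ with coordinates $e_0$ and $\{e_A : A \subseteq [n]\}$ (so $d = 2^n+1$), take $\mathcal{Z} = [n]$ with $\mathcal{D}$ uniform, and write $T_S \subseteq [n]$ for the set of distinct indices appearing in $S$. A standard coupon-collector estimate gives $|[n]\setminus T_S| \ge cn$ for an absolute constant $c>0$ with probability at least $\tfrac12$; this missing mass is what will drive the $\Omega(1)$ population risk.

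For the loss I would use squared distances to convex ``zero-sets'', $f(w,i) = \tfrac12 \operatorname{dist}(w, Z_i)^2$ for convex sets $Z_i \subseteq W$, since this is automatically convex, non-negative, and $1$-smooth for any convex $Z_i$ (its gradient $w - \Pi_{Z_i}(w)$ is $1$-Lipschitz), $1$-Lipschitz once the relevant distances are $O(1)$, and has zero-set exactly $Z_i$. The whole construction then reduces to placing the $Z_i$ so that three points coexist: (a) a point $w^\star$ lying $\rho$-deep inside every $Z_i$, which makes $f$ realizable, forces $F(w^\star)=0$, and certifies $\rho$-flatness; (b) for each likely $T_S$, a point $w^{(1)}=w^{(1)}(S)$ lying $\rho$-deep inside $\bigcap_{i\in T_S}Z_i$ but at distance $\Omega(1)$ from $Z_j$ for the missing indices $j$, chosen as the center of the observed common region so that by symmetry it minimizes $\fsmax$ for every $\rad\ge0$; and (c) a point $w^{(2)} \in \bigcap_{i}Z_i$ sitting on the common boundary of the full intersection, where all the $Z_i$ share a single supporting hyperplane with a common outward normal.

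Given such a placement, I would verify the three conclusions as follows. For (i): since $w^{(1)}$ is $\rho$-deep in every observed $Z_i$ and $F_S$ is realizable, $\fsmax(w^{(1)})=0$ whenever $\rad\le\rho$, so $w^{(1)}$ is an $\rad$-flat empirical minimizer, and the central symmetry of the observed region extends ``$w^{(1)}\in\argmin \fsmax$'' to all $\rad\ge0$. For (ii): because the seen $Z_i$ share an outward normal $\nu$ at $w^{(2)}$, perturbing by $\delta\nu$ pushes $w^{(2)}$ out of each $Z_i$ at unit rate, so every summand increases by $\tfrac12\delta^2$ and hence $F_S^{\delta}(w^{(2)})\ge F_S(w^{(2)})+\tfrac12\delta^2$; this matches the tight bound recorded in the footnote and exhibits $w^{(2)}$ as maximally sharp. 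For (iii): $w^{(2)}\in\bigcap_i Z_i$ gives $F(w^{(2)})=0=F(w^\star)$, while the missing indices each contribute $\Omega(1)$ to $F(w^{(1)})=\tfrac1n\sum_j f(w^{(1)},j)$, so $F(w^{(1)})=\Omega(1)$.

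The main obstacle is part (b) at the required level of smoothness. The natural non-smooth memorization gadget---a maximum over the coordinates $\{w_A : A\ni i\}$, as in Feldman-type lower bounds---has a discrete, non-convex zero-set and is inherently non-smooth, and any naive smoothing (softmax, squaring, or spreading the memorizing direction over the subsets) either blows the smoothness constant up with $n$ or dilutes the per-example gap to $o(1)$, since a single unit-norm direction interacting linearly with $2^{n-1}$ coordinates cannot maintain an $\Omega(1)$ margin. The technical heart of the proof is therefore to choose convex zero-sets $Z_i$ (necessarily not plain half-spaces) that realize this exponentially rich ``leave-one-out'' separation with $\Omega(1)$ margins while simultaneously admitting the deep common point $w^\star$ and the shared supporting hyperplane at $w^{(2)}$; verifying that the three planted points have the claimed flat/sharp local geometry then reduces to computing projections onto these sets.
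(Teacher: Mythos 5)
Your architecture is close to the paper's in spirit---the paper's loss is in fact exactly of your proposed form $f(w,z)=\tfrac12\,\mathrm{dist}(w,Z_z)^2$ for convex sets $Z_z$---but the proof as written has a genuine gap, and you name it yourself: you never construct the sets $Z_i$, deferring the ``exponentially rich leave-one-out separation with $\Omega(1)$ margins'' as the unresolved technical heart. That construction is the entire content of the theorem, so what you have is a plan, not a proof. Moreover, your stated reason for pessimism---that ``a single unit-norm direction interacting linearly with $2^{n-1}$ coordinates cannot maintain an $\Omega(1)$ margin''---points at the wrong obstruction: the fix (used by the paper) is to let each example \emph{mask} coordinates rather than interact linearly with them. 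Concretely, take $\mathcal{Z}=\{0,1\}^{2^n}$ with $\mathcal{D}$ uniform, so that with probability at least $1-e^{-1}>\tfrac12$ some coordinate $I$ satisfies $z(I)=0$ for every sample point, and set $Z_z=\{w:\sum_i z(i)w(i)^2+w(d)^2\le\rho^2\}$, i.e.\ $f(w,z)=\tfrac12\bigl[\bigl(\sum_i z(i)w(i)^2+w(d)^2\bigr)^{1/2}-\rho\bigr]_+^2$. This is $1$-smooth independently of dimension (projection onto a convex cylinder is nonexpansive), $w^\star=0$ is $\rho$-deep in every $Z_z$, $w^{(1)}=e_I$ has zero empirical risk but population risk at least $\tfrac14(1-\rho)^2$ (half the population activates coordinate $I$ at full strength), and $w^{(2)}=\rho e_d$ sits on the common boundary of all the $Z_z$ with shared outward normal $e_d$. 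So the separation you suspected might be impossible is achievable inside your own squared-distance framework; you just did not exhibit it, and without it none of (i)--(iii) can be verified.

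A second, smaller gap: your part (i) argument---that $w^{(1)}$ minimizes $\fsmax$ for every $\rad\ge 0$ ``by central symmetry of the observed region''---does not follow. Minimality over all of $W$ requires a uniform lower bound $\min_{w\in W}\fsmax(w)\ge\tfrac12\max\{\rad-\rho,0\}^2$, which in the paper comes from the always-active coordinate $w(d)$: for \emph{any} $w$, perturbing by $\mathrm{sign}(w(d))\,\rad\, e_d$ forces $\fsmax(w)\ge\tfrac12\max\{\rad-\rho,0\}^2$, and $e_I$ attains exactly this value. Symmetry of $\bigcap_{i\in T_S}Z_i$ says nothing about competitors outside that region, so this step needs an explicit mechanism, not a symmetry appeal. (Your coupon-collector distribution over $n$ atoms versus the paper's uniform distribution over masks is an inessential difference; either supports the missing-mass event, and your parts (ii)--(iii) arguments would go through once the sets are actually in hand.)
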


\cref{thm: bad flat erm} indicates that even when the loss is convex and $\beta$-smooth, and under the arguably strongest notion of flatness (\cref{ass:flat}), a flat minimum of the empirical risk may generalize poorly, whereas a sharp minimum of the same function can generalize optimally. We provide here a proof sketch, the full prove is deferred to \cref{sec:proof_sa_erm}.

\begin{proof}[sketch]
Our construction builds on classical lower bounds in stochastic convex optimization showing the existence of an empirical risk minimizers that overfits \citep{shalev2010learnability,feldman2016generalization}. In particular, \citet{shalev2010learnability} consider an instance space $Z = \{0,1\}^{d}$ with $d = 2^n$, where examples are drawn uniformly at random. Their loss function is of the form: 
\[
g(w,z) = \frac{1}{2} \sum_{i=1}^{d} z(i) w(i)^2 .
\]
With high probability, there exists a coordinate $I$ such that all sampled examples satisfy $z(I)=0$. The corresponding basis vector $e_I$ is then an ERM but incurs large population loss, yielding a spurious empirical minimizer.

A key difficulty in extending this construction is that the spurious ERM is not a flat minimizer, whereas SA-ERM favors flat solutions.
To address this, we use the observation that the function $h:\mathbb{R}\to\mathbb{R}$ defined by
\[
h(x) = \tfrac{1}{2}\max(x-\rho,0)^2
\]
is $1$-smooth and $\rho$-flat around its minimizers for $\rho \leq \tfrac{1}{2}$.
Composing this function with a suitable variant of the above construction yields a smooth SCO instance in which SA-ERM generalizes poorly. The resulting function is as follows:
\[
f(w,z) = \frac{1}{2}\left[ \sqrt{\sum_{i=1}^{d} z(i) w(i)^2 + w(d+1)^2} - \rho \right]_+^2 .
\]
\end{proof}

\section{SA-GD: Algorithmically chosen flat minima}\label{sec:sa_gd}
In the previous section, we presented in \cref{thm: bad flat erm} a hard instance showing that an existence of a flat minimizer that generalizes poorly. However, the same instance also admits flat minima that generalize well.
This raises a natural question: does this failure extend to practical algorithms explicitly designed to seek flat minima, such as SA-GD, or do such methods tend to converge to “good” flat minima? In this section we show that the former holds, and that the lower bound from \cref{thm: bad flat erm} also applies to flat minima obtained by SA-GD. 

We first establish a theorem on the optimization error of SA-GD, showing that when the perturbation radius $\rad$ is properly tuned, SA-GD minimizes the SAER objective and converges to a flat minimum of the empirical risk. 
The result is formalized in the following theorem whose proof is deferred to \cref{sec:proof_max_opt}.

\begin{theorem} \label{thm:max_opt}
Assume that $f(w,z)$ is $\beta$-smooth, convex, non-negative and $\rho$-flat for all $z$. Let $\{w_t\}_{t=1}^T$ be produced by SA-GD for $T$ steps
(\cref{gd_update_rule}) with $\eta \leq \ifrac{1}{4\beta}$ and $\rad >0$. For $\widehat{w} \coloneqq \frac{1}{T}\sum_{t=1}^T, w_t$ it
holds that
\begin{align*}
    F_S\left(\widehat{w}\right)\leq \fsmax\left(\widehat{w}\right)
    &\leq
    \frac{\|w_1-w^\star\|^2}{\eta T} + 4\beta\max\{\rad - \rho, 0\}^2
    .
\end{align*}
In particular, when $\eta=\ifrac{1}{4\beta}$, $\|w_1-w^\star\|=O(1)$ and $\rad-\rho=O(\ifrac{1}{\sqrt{T}})$, it holds that $$
F_S\left(\widehat{w}\right)\leq \fsmax\left(\widehat{w}\right)
    \leq
    O\left(\ifrac{\beta}{T}\right).$$
\end{theorem}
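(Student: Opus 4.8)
The plan is to recognize SA-GD as subgradient descent on the convex (but generally non-smooth) SAER objective $\fsmax$, and to recover the fast $O(1/T)$ rate by exploiting smoothness of $F_S$ through a self-bounding argument rather than a uniform gradient bound. The leftmost inequality $F_S(\widehat{w})\le\fsmax(\widehat{w})$ is immediate from the definition of $\fsmax$ (take $v=0$ in the maximum), so all the work lies in bounding $\fsmax(\widehat{w})$.

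First I would establish a one-step inequality playing the role of a subgradient inequality for $\fsmax$ along the specific direction SA-GD uses. Writing $g_t \coloneqq \nabla F_S(w_t+v_t)$, I claim that for any $u$,
\[
\langle g_t,\, w_t-u\rangle \ge \fsmax(w_t)-\fsmax(u).
\]
Indeed, $\fsmax(w_t)=F_S(w_t+v_t)$ by the choice of $v_t$, while $\fsmax(u)\ge F_S(u+v_t)$ because $v_t$ is feasible in the maximization defining $\fsmax(u)$; subtracting these and applying convexity of $F_S$ at the point $w_t+v_t$ yields the claim. This sidesteps any direct appeal to Danskin's theorem. The key quantitative input is then the self-bounding property of non-negative $\beta$-smooth functions: since $F_S$ is a non-negative average of $\beta$-smooth convex functions, it is itself non-negative and $\beta$-smooth, so $\|g_t\|^2=\|\nabla F_S(w_t+v_t)\|^2\le 2\beta F_S(w_t+v_t)=2\beta\,\fsmax(w_t)$.

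Substituting the one-step inequality and the self-bounding bound into the standard expansion $\|w_{t+1}-u\|^2=\|w_t-u\|^2-2\eta\langle g_t,w_t-u\rangle+\eta^2\|g_t\|^2$ gives
\[
\|w_{t+1}-u\|^2\le\|w_t-u\|^2-2\eta(1-\eta\beta)\,\fsmax(w_t)+2\eta\,\fsmax(u),
\]
where the assumption $\eta\le 1/(4\beta)$ enters to guarantee $1-\eta\beta\ge 3/4>0$. Telescoping over $t=1,\dots,T$, discarding $\|w_{T+1}-u\|^2\ge 0$, dividing by $T$, and invoking convexity of $\fsmax$ (a maximum of convex functions) through Jensen's inequality yields
\[
\fsmax(\widehat{w})\le\frac{1}{T}\sum_{t=1}^{T}\fsmax(w_t)\le\frac{\|w_1-u\|^2}{2\eta(1-\eta\beta)T}+\frac{\fsmax(u)}{1-\eta\beta}.
\]

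Finally I would specialize to $u=w^\star$, the $\rho$-flat minimum, and bound $\fsmax(w^\star)$; this is the step I expect to be the main obstacle, since it quantifies exactly how much the SAER at $w^\star$ can exceed its minimal value $0$ once $\rad$ exceeds $\rho$. Because $F_S\ge 0$ and $F_S\equiv 0$ on the closed ball of radius $\rho$ about $w^\star$, every point of that ball is a global minimizer, so $\nabla F_S$ vanishes throughout it; applying the smoothness upper bound based at the boundary point $w^\star+\rho\,v/\|v\|$ to the point $w^\star+v$ then gives $F_S(w^\star+v)\le\tfrac{\beta}{2}(\|v\|-\rho)^2$ whenever $\|v\|\ge\rho$ (and $0$ otherwise), so $\fsmax(w^\star)\le\tfrac{\beta}{2}\max\{\rad-\rho,0\}^2$. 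Plugging this together with $1-\eta\beta\ge 3/4$ into the displayed bound produces a constant multiple of $\|w_1-w^\star\|^2/(\eta T)+\beta\max\{\rad-\rho,0\}^2$, which is within the stated estimate; the concluding ``in particular'' claim follows by setting $\eta=1/(4\beta)$ and inserting the assumed magnitudes of $\|w_1-w^\star\|$ and $\rad-\rho$. The crux is the interplay of the subgradient inequality with the self-bounding property, which is what upgrades the slow $O(1/\sqrt{T})$ subgradient-method rate to the fast $O(1/T)$ rate in spite of the non-smoothness of $\fsmax$.
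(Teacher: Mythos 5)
Your proof is correct, but it takes a genuinely different route from the paper's. The paper proves a general regret lemma (\cref{lem:gen_regret}) that applies to \emph{any} update of the form $w_{t+1}=w_t-\eta\nabla F_S(w_t+v_t)$ with $\|v_t\|\le\rad$, where the $v_t$ need not be maximizers: there, flatness is exploited \emph{inside the per-step analysis} by replacing the comparator $w^\star$ with the shifted point $w^\star+\min\{\rho,\|v_t\|\}\,v_t/\|v_t\|$, which still lies in the flat ball and hence has zero loss, and the residual displacement of norm $\max\{\|v_t\|-\rho,0\}$ is absorbed by Young's inequality; the theorem then follows from the lemma via Jensen's inequality together with the maximality of $v_t$. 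You instead use the maximality of $v_t$ up front, observing that $g_t=\nabla F_S(w_t+v_t)$ satisfies the subgradient-type inequality $\langle g_t,w_t-u\rangle\ge\fsmax(w_t)-\fsmax(u)$, so that SA-GD is exact subgradient descent on $\fsmax$; the fast rate then comes from the self-bounding property $\|g_t\|^2\le 2\beta\fsmax(w_t)$, and flatness enters only at the very end through the landscape bound $\fsmax(w^\star)\le\tfrac{\beta}{2}\max\{\rad-\rho,0\}^2$ (which is correct: boundary points of the flat ball are global minimizers of the differentiable $F_S$, so their gradients vanish and the descent lemma applies there). Your decomposition is arguably cleaner---the optimization argument and the landscape argument are fully separated, you obtain a regret bound against an arbitrary comparator $u$ rather than only against flat minimizers, and your constants ($\tfrac{2}{3\eta T}$ and $\tfrac{2\beta}{3}$) are slightly better than the stated ones. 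What the paper's structure buys is reusability: because \cref{lem:gen_regret} never assumes $v_t$ is an exact maximizer, the very same lemma also drives the empirical-risk bound for SAM (\cref{thm:asc_opt_upper}) and the stability analysis (\cref{lem:stab_upper}), whereas your key identity $\fsmax(w_t)=F_S(w_t+v_t)$ fails for SAM's normalized-gradient perturbation, so your argument as written is specific to SA-GD.
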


\cref{thm:max_opt} highlights the effect of flatness on the convergence rate of the algorithm.
When the flatness radius $\rho$ is small, the algorithm incurs an additive $O(\rad^2)$ term in the bound on the SAER objective. In contrast, when $\rho$ is large, even for $\rad \approx \rho$, SA-GD still minimizes the SAER objective and converges to a flat empirical minimum. Moreover, although SA-GD can be viewed as gradient descent applied to a potentially non-smooth function,\footnote{For example, if $F_S(x)=x^2$, which is $\beta$-smooth for $\beta=2$, then $\fsmax(x) = (|x|+\rad)^2$ is non-smooth.} its convergence rate in this case matches that of gradient descent on smooth functions.

For the proof, we first make use of the following key lemma, which establishes a regret bound for general algorithms whose update rule takes the form
$
w_{t+1} = w_t - \eta \nabla F_S(w_t + v_t)$, for $ \|v_t\| \leq \rad.
$

\begin{lemma}
\label{lem:gen_regret}
Assume that for every $z$, $f(w,z)$ is $\beta$-smooth, convex, non-negative and $\rho$-flat. Let $A$ be an algorithm that given a data set $S$, produces a sequence $\{w_t\}_{t=1}^T$ such that $
w_{t+1} = w_t - \eta \nabla F_S(w_t+v_t),$
 where $\{v_t\}_{t=1}^T$ are vectors such that for every $t$$, \|v_t\|\leq \rad$ and $\eta\leq 1/4\beta$. It holds that,
 \begin{equation*}
      \frac{1}{T}\sum_{i=1}^T F_S\left(w_t+v_t\right)-F_S(w^\star)  \leq 
    \frac{\|w_1-w^\star\|^2}{\eta T} + 4\beta\max\{\rad - \rho, 0\}^2.
 \end{equation*}
\end{lemma}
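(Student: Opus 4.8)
The plan is to treat the update as online gradient descent whose gradients are queried at the perturbed points $u_t \coloneqq w_t + v_t$, and to run the standard potential argument based on $\|w_t - w^\star\|^2$, where $w^\star$ is the $\rho$-flat minimum. Writing $g_t \coloneqq \nabla F_S(u_t)$, I would expand $\|w_{t+1}-w^\star\|^2 = \|w_t - w^\star\|^2 - 2\eta\langle g_t, w_t - w^\star\rangle + \eta^2\|g_t\|^2$ and invoke convexity in the form $F_S(u_t) - F_S(w^\star) \le \langle g_t, u_t - w^\star\rangle = \langle g_t, w_t - w^\star\rangle + \langle g_t, v_t\rangle$. Telescoping and using $F_S(w^\star)=0$ then yields
\[
\sum_{t=1}^T F_S(u_t) \;\le\; \frac{\|w_1 - w^\star\|^2}{2\eta} \;+\; \frac{\eta}{2}\sum_{t=1}^T \|g_t\|^2 \;+\; \sum_{t=1}^T \langle g_t, v_t\rangle .
\]
The second term is controlled by the self-bounding property of non-negative $\beta$-smooth functions, $\|\nabla F_S(u)\|^2 \le 2\beta F_S(u)$, which together with $\eta \le 1/(4\beta)$ gives $\tfrac{\eta}{2}\|g_t\|^2 \le \tfrac14 F_S(u_t)$, so that term is absorbed into the left-hand side.

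The crux is the cross term $\sum_t \langle g_t, v_t\rangle$, and this is where the flatness radius $\rho$ must enter to convert a naive $\rad$ dependence into $\max\{\rad-\rho,0\}$. The geometric fact I would use is a \emph{flatness-aware gradient bound}: since $F_S$ vanishes on the ball $B(w^\star,\rho)$ and is non-negative, every point of that ball is a global minimizer, hence $\nabla F_S$ vanishes there. Comparing $g_t = \nabla F_S(u_t)$ to the zero gradient at the projection $p_t$ of $u_t$ onto $B(w^\star,\rho)$ and using $\beta$-smoothness gives $\|g_t\| \le \beta\operatorname{dist}(u_t, B(w^\star,\rho)) = \beta\max\{\|u_t-w^\star\|-\rho,0\}$, and convexity against $p_t$ (where $F_S(p_t)=0$) gives the companion loss bound $F_S(u_t)\le \beta\max\{\|u_t-w^\star\|-\rho,0\}^2$.

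The main obstacle is that the effective radius $\operatorname{dist}(u_t, B(w^\star,\rho))$ is position-dependent: when $w_t$ sits near the boundary of the flat ball and $v_t$ points outward, it can be as large as $\rad$, so a crude pointwise Cauchy--Schwarz bound $\langle g_t,v_t\rangle \le \|g_t\|\rad$ only yields the weaker $O(\beta\rad^2)$ excess and discards the benefit of flatness. The resolution I would pursue is to couple the cross term with the telescoping potential rather than bounding it in isolation: because $g_t$ points away from the flat ball, each descent step drives the iterate back toward $B(w^\star,\rho)$, so a configuration with effective radius close to $\rad$ cannot be sustained and is already charged to the distance term. Concretely, I would re-run the potential argument with the moving comparison point $\Pi_{B(w^\star,\rho)}(w_t)$, so that the potential measures distance to the flat ball (the nonexpansiveness of the projection preserving the telescoping), and then combine the flatness-aware gradient bound with a calibrated Young's inequality so that the distance-dependent portion of $\operatorname{dist}(u_t,B(w^\star,\rho))$ is reabsorbed into $\tfrac14\sum_t F_S(u_t)$ and the telescoped potential, leaving only the per-step excess $4\beta\max\{\rad-\rho,0\}^2$.

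Summing the resulting per-step inequality and dividing by $T$ then gives the claimed bound, with the $\max\{\rad-\rho,0\}^2$ term vanishing exactly when $\rad\le\rho$ (the regime in which the entire perturbation stays inside the flat region). I expect the delicate part to be precisely this final coupling: verifying that the fractions of $F_S(u_t)$ and of the distance potential pulled out by the self-bounding and the Young step stay strictly below one, so that the left-hand side survives the absorption and the leading constant comes out as $4\beta$.
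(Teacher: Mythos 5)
Your skeleton (the OGD potential $\|w_t-w^\star\|^2$, self-bounding $\|g_t\|^2\le 2\beta F_S(u_t)$, Young's inequality) matches the paper's proof of \cref{lem:gen_regret}, and you correctly isolate the crux: the cross term $\sum_t\langle g_t,v_t\rangle$ must be charged at $\max\{\rad-\rho,0\}^2$ per step rather than $\rad^2$. But your proposed resolution of that crux does not close the gap. Switching to the moving comparator $p_t=\Pi_{B(w^\star,\rho)}(w_t)$ gains nothing: since $u_t-p_t=(w_t-p_t)+v_t$ and $F_S(p_t)=0$, convexity gives $\langle g_t, w_t-p_t\rangle \ge F_S(u_t)-\langle g_t,v_t\rangle$, so exactly the same cross term reappears. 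The subsequent ``reabsorption'' step is not merely delicate but obstructed: from $\|g_t\|\le \beta\,\mathrm{dist}(u_t,B)$ and Cauchy--Schwarz you get per-step terms of the form $\beta \rad\,\mathrm{dist}(w_t,B)$, and $\sum_t \mathrm{dist}(w_t,B)$ is controlled neither by the telescoped potential (which only yields the $T$-independent constant $\mathrm{dist}(w_1,B)^2$) nor by $\sum_t F_S(u_t)$ (no general lower bound on $F_S(u_t)$ in terms of the distance of the iterate to $B$ exists). The only position-independent route available in your setup, namely $\langle g_t,v_t\rangle\le\|g_t\|\rad$ followed by Young against $F_S(u_t)\ge \|g_t\|^2/2\beta$, unavoidably leaves $O(\beta\rad^2)$ per step --- the bound you yourself flagged as too weak. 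Your dynamical claim that a configuration with effective radius near $\rad$ ``cannot be sustained'' is plausible but is never made quantitative, and making it quantitative is essentially the content of the lemma.

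The missing idea is purely algebraic and much simpler than a moving-comparator argument: shift the comparator in the \emph{direction of the perturbation}, not toward the iterate. Set $\delta_t \coloneqq \min\{\rho,\|v_t\|\}\frac{v_t}{\|v_t\|}$ and compare against $w^\star_t \coloneqq w^\star+\delta_t$, which lies in the flat ball, so $F_S(w^\star_t)=0$ by $\rho$-flatness. The identity
\begin{align*}
w_t-w^\star \;=\; \bigl(u_t-w^\star_t\bigr)\;-\;\bigl(v_t-\delta_t\bigr)
\end{align*}
together with convexity, $\langle g_t, u_t-w^\star_t\rangle \ge F_S(u_t)$, replaces the cross term $\langle g_t,v_t\rangle$ by $\langle g_t, v_t-\delta_t\rangle$, and $\|v_t-\delta_t\|=\max\{\|v_t\|-\rho,0\}\le\max\{\rad-\rho,0\}$ holds \emph{pointwise}, independently of where $w_t$ sits. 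From there your own machinery finishes the proof verbatim: the potential stays the fixed $\|w_t-w^\star\|^2$ (no projection needed), Young's inequality with parameter $4\eta\beta$ plus self-bounding absorbs both $\eta^2\|g_t\|^2$ and the residual cross term into fractions of $\eta F_S(u_t)$ under $\eta\le 1/4\beta$, and telescoping yields the stated bound with constant $4\beta$. This comparator shift is the paper's key step; without it, the flatness radius $\rho$ has no way to enter the bound.
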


Next, we show that even when SA-GD converges to a flat minimum, the resulting solution is not guaranteed to generalize well. To demonstrate this, we establish the following lower bound on the population risk of SA-GD.
\begin{theorem}\label{thm: gen max sam}
    For every $n,T\in \mathbb{N},,\eta > 0, \rad \geq 0, \rho < \rad \bigl(1 - \frac{3}{3 + \eta \sqrt{T}}\bigr)$, assume $\eta (\rad - \rho) \leq \frac{1}{\sqrt{T}}$, let $d = 2^n T$ and define $W = \{x \in \mathbb{R}^d: \|x\| \leq 1\}$. Then there exists an instance set $\mathcal{Z}$, a distribution $\mathcal{D}$ over $\mathcal{Z}$, function $f: W \times \mathcal{Z} \to \mathbb{R}$ that is convex $1$-smooth, $1$-Lipschitz and $\rho$-flat, such that for a training set $S$ it holds that with probability at least $\frac{1}{2}$, running SA-GD for $T$ steps yields for every $\tau \in [T]$ suffix average $\widehat{w}_{\tau} = \frac{1}{T-\tau +1} \sum_{t=\tau}^T w_t$:
    $$F(\widehat{w}_{\tau}) - F(w^{\star}) = \Omega(\eta^2 (\rad - \rho)^2 T).$$
\end{theorem}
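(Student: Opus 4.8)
The plan is to construct an SCO instance on which SA-GD's ascent step repeatedly injects a small but persistent drift into directions that are (nearly) invisible to the empirical risk yet charged by the population risk, and to accumulate this drift across the $T$ iterations until it dominates the flatness radius $\rho$. I would base the construction on the one behind \cref{thm: bad flat erm} --- a variant of the Shalev-Shwartz/Feldman overfitting instance composed with the scalar map $h(x)=\tfrac12\max(x-\rho,0)^2$, which is convex, $1$-smooth and $\rho$-flat --- but now split the $d=2^nT$ coordinates into $T$ blocks of size $2^n$, one per iteration, so that the trajectory can be tracked step by step. Each block carries its own family of memorization directions, and the $2^n$-versus-$n$ counting argument guarantees that, with probability at least $\tfrac12$ over $S$, every block contains a direction along which $F_S$ stays at its minimum while a fresh population instance is charged.

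The crux is the following dynamical behaviour, which I would establish by an explicit analysis of the inner maximization. Along the trajectory all sampled arguments stay inside the flat region of $h$, so $F_S(w_t)=0$ and the plain gradient vanishes; the ascent direction $v_t=\argmax_{\|v\|\le\rad}F_S(w_t+v)$ nonetheless pushes the relevant argument out to radius $\rad$, past the threshold $\rho$, ``waking up'' a gradient of magnitude $\Theta(\rad-\rho)$ at $w_t+v_t$. The resulting descent step moves $w_{t+1}$ by $\Theta(\eta(\rad-\rho))$ into block $t$'s charged direction, while the realized magnitude keeps every sampled argument below $\rho$ so that $F_S$ remains minimized (this is where $\eta(\rad-\rho)\le 1/\sqrt T$ is used, both to keep iterates inside $W$ and to keep the per-step drift within the flat region). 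Because later ascent steps act on fresh blocks and the empirical gradient along already-visited coordinates stays zero, there is no restoring force, so the injected drift persists.

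Given persistence, the bound follows by a direct computation. Writing $P$ for the projection onto the charged directions, block $t$ contributes $\Theta(\eta(\rad-\rho))$ to $Pw_s$ for every $s\ge t$; summing the per-block contributions, any suffix average satisfies $\|P\widehat w_\tau\|^2=\Omega(\eta^2(\rad-\rho)^2 T)$ uniformly in $\tau$, the weakest case being the full average $\tau=1$, whose block-$j$ weight is $\propto (T-j+1)/T$ and whose squared sum is still $\Theta(T)$ times the per-step displacement squared. Equivalently $\|P\widehat w_\tau\|=\Omega(\eta(\rad-\rho)\sqrt T)$. I would then convert this into population risk through $h$: the population risk at $\widehat w_\tau$ is at least $h(\|P\widehat w_\tau\|)=\tfrac12\max(\|P\widehat w_\tau\|-\rho,0)^2$, and the hypothesis $\rho<\rad\bigl(1-\tfrac{3}{3+\eta\sqrt T}\bigr)$ is exactly equivalent to $\eta(\rad-\rho)\sqrt T>3\rho$, which forces $\|P\widehat w_\tau\|$ above $\rho$ while retaining a constant fraction of its magnitude past the threshold. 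Hence $\max(\|P\widehat w_\tau\|-\rho,0)=\Omega(\eta(\rad-\rho)\sqrt T)$ and, since the flat minimum $w^\star$ has no charged mass and $F(w^\star)=0$, we obtain $F(\widehat w_\tau)-F(w^\star)=\Omega(\eta^2(\rad-\rho)^2T)$.

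The main obstacle is the construction together with the explicit control of the ascent maximizer $v_t$: the loss must simultaneously be convex, $1$-smooth, $1$-Lipschitz and $\rho$-flat; the inner $\argmax$ over $\|v\|\le\rad$ must be pinned down precisely enough to certify the $\Theta(\rad-\rho)$ wake-up gradient in a population-charged direction at every step; and the realized iterates must keep the sampled arguments within the flat region so that $F_S$ is genuinely minimized and the drift is never restored. Reconciling the near-invisibility of the charged directions to $F_S$ with the requirement that the ascent step still activate a gradient there --- resolved by exploiting the gap between $\rho$ and $\rad$ rather than a gap in the support of the instances --- is the delicate heart of the argument, and keeping every regularity constant at $1$ while doing so is what dictates the precise form of the instance.
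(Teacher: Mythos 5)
Your proposal is correct and follows essentially the same route as the paper's proof: the same composition of the memorization construction from \cref{thm: bad flat erm} with $h(x)=\tfrac12\max(x-\rho,0)^2$, split across $T$ orthogonal per-iteration subspaces tied to a single surviving instance coordinate, the same analysis showing the ascent maximizer wakes up a $(\rad-\rho)$-magnitude gradient while $F_S$ stays at zero so the $\eta(\rad-\rho)$ drift per step persists, and the same suffix-average bound using $\eta(\rad-\rho)\le 1/\sqrt{T}$ to stay in $W$ and the condition $\rho<\rad\bigl(1-\tfrac{3}{3+\eta\sqrt{T}}\bigr)$ (equivalently $3\rho<\eta(\rad-\rho)\sqrt{T}$) to survive the subtraction of $\rho$. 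No substantive differences from the paper's argument.
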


In particular, it follows that for step size $\eta \approx \ifrac{1}{\beta}$ and perturbation radius $\rad \approx \rho + \ifrac{1}{\sqrt{T}}$, the population risk of SA-GD can be as high as $\Omega(1)$, despite converging to a flat empirical minimum, as shown in \cref{thm:max_opt}.
This result extends the poor generalization result of flat minima given in \cref{thm: bad flat erm} also to SA-ERMs that is chosen algorithmically by a natural sharpness-aware gradient method. We provide here a proof sketch, the full proof is deferred to \cref{sec:proof_gen_max_sam}.

\begin{proof}[sketch]

The main technical challenge in the proof is that, in the non-smooth setting, prior constructions (e.g., \cite{amir2021sgd, koren2022benign, livni2024sample, schliserman2024dimension, vansover2025rapid}) exploit non-smoothness to shape the algorithm’s dynamics, whereas in the smooth setting such an approach is not possible. Instead, our key idea is to control the sequence of maximizers $\{v_t \in \arg\max_{\|v\| \leq \rad} F_S(w_t + v)\}_{t=1}^T$ to direct the dynamics toward a spurious ERM, and make sure that the sequence $\{v_t\}_{t=1}^T$ are aligned to such directions.
For this, we base our hard instance on the construction for SA-ERM given in \cref{thm: bad flat erm}. 
In that construction, in the first iteration we have $v_1 = \rad e_i$, where $e_i$ corresponds to the spurious ERM.
As a result, SA-GD makes a single step of size $\eta (\rad-\rho)$ toward this bad ERM. 
The remaining challenge is to ensure that the algorithm takes $T$ such steps in this direction. 
To achieve this, we replicate the construction across \(T\) mutually orthogonal subspaces, writing \(w=(w^{(1)},\dots,w^{(T)})\) with each block containing an independent copy of the hard instance. 
In this way, since $v_t$ is chosen in a different subspace at each iteration $t$, the algorithm makes a single step in each subspace and eventually converges to a bad ERM. The resulting loss function is:
\begin{align*}
& f(w,z) = \frac{1}{2}\left[  
\sqrt{
\sum_{i=1}^{2^n}\sum_{t=1}^T
[z(i)w^{(t)}(i)]_+^2
+ w(d)^2
}
-\rho
\right]_+^2.
\end{align*}
\end{proof}
Finally, we establish an upper bound for the population loss achieved by SA-GD in the following theorem.  

\begin{theorem}
\label{thm:max_gen_upper}
For every $z$, assume that $f(w,z)$ is
$\beta$-smooth, convex, non-negative and $\rho$-flat. Let $\{w_t\}_{t=1}^T$ be produced by SA-GD for $T$ steps
(\cref{gd_update_rule}) with $\eta \leq \ifrac{1}{4\beta}$ and $\rad >0$. For $\widehat{w} \coloneqq \frac{1}{T}\sum_{t=1}^T w_t$, it
holds that
\begin{align*}
    \mathbb E F (\widehat{w}) 
    \leq
    O& \bigg[\frac{\|w_1-w^\star\|^2}{\eta T} + \eta \beta^2 \rad  ^2  T+ \frac{\beta^2\eta T}{n^2} 
      +\bigg(\beta+\frac{\beta^3\eta^2 T^2}{n^2}\bigg)\max\{\rad - \rho, 0\}^2 \bigg]
    .
\end{align*}
In particular for  $T=n, \eta=O(\ifrac{1}{\beta})$, $\|w_1-w^\star\|=O(1)$ and $\rad-\rho=O(\ifrac{1}{\sqrt{T}})$ it holds that,
\begin{align*}
    \mathbb E F \left(\widehat{w}\right)= 
    O\left(\frac{\beta}{n}+ \beta\rad^2 n \right).
\end{align*}
\end{theorem}
We note that when $\rad = 0$, the bound in \cref{thm:max_gen_upper} coincides with the risk bounds of \citet{nikolakakis2022beyond, lei2020fine} for GD and SGD in convex, smooth, realizable settings. However, when $\rad > 0$, our bound contains an additional excess term of $\eta \beta^2 r^2 T$ compared to GD and SGD. This term nearly matches our lower bound in \cref{thm: gen max sam}, up to its dependence on $\eta$ and $\rho$.

The proof is deferred to \cref{sec:proof_max_gen_upper} and relies on a leave-one-out algorithmic stability argument. Specifically, we show that replacing a single training sample results in only a small change in the learned model, which in turn implies a small change in the loss. This stability property allows us to bound the generalization gap, that is, the difference between the empirical risk and the population loss \cite{bousquet2002stability,hardt2016train}.

\section{SAM: Practically chosen flat minima} \label{sec:sam}

Finally, we analyze SAM, a well-studied and practically relevant algorithm introduced by \cite{foret2021sharpness} as a computationally efficient approximation of SA-GD. For this algorithm, we establish the following bound on the empirical risk. 
Similarly to SA-GD, the flatness of the empirical risk plays a significant role in the convergence of SAM, 
achieving fast convergence rates the function is $\rho$-flat and $\rad\leq \rho + \ifrac{1}{\sqrt{T}}$.

\begin{theorem}
\label{thm:asc_opt_upper}
For every $z$, assume that $f(w,z)$ is
$\beta$-smooth, convex, non-negative and $\rho$-flat. Let $\{w_t\}_{t=1}^T$ be produced by SAM for $T$ steps
(\cref{gd_update_rule}) with $\eta \leq \ifrac{1}{4\beta}$ and $\rad >0$. For $\widehat{w} \coloneqq \frac{1}{T}\sum_{t=1}^T w_t$, it
holds that
\begin{align*}
    F_S\left(\widehat{w}\right)
    &\leq
    \frac{\|w_1-w^\star\|^2}{\eta T} + 4\beta\max\{\rad - \rho, 0\}^2
    .
\end{align*}
In particular, if $\eta=\frac{1}{4\beta}$, $\|w_1-w^\star\|=O(1),\rad-\rho=O(\ifrac{1}{\sqrt{T}})$, $F_S\left(\widehat{w}\right)
    \leq
    O(\ifrac{\beta}{T}).$
\end{theorem}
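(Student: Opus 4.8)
The plan is to recognize SAM as a special instance of the generic gradient-type update already handled by \cref{lem:gen_regret}, and then convert the resulting averaged bound on $F_S(w_t+v_t)$ into a bound on $F_S(\widehat{w})$ using two elementary convexity arguments. First I would observe that the SAM update \cref{asc_update_rule} is exactly of the form $w_{t+1} = w_t - \eta\nabla F_S(w_t+v_t)$ with the particular choice $v_t = \rad\,\nabla F_S(w_t)/\|\nabla F_S(w_t)\|$, for which $\|v_t\| = \rad$. (In the degenerate case $\nabla F_S(w_t)=0$, convexity makes $w_t$ a global minimizer, so $F_S(w_t)=0$ and any $\|v_t\|\le\rad$ works, leaving the argument intact.) Since the hypotheses of \cref{lem:gen_regret}---$\beta$-smooth, convex, non-negative, $\rho$-flat, and $\eta\le \ifrac{1}{4\beta}$---are all met, the lemma gives
\begin{align*}
    \frac{1}{T}\sum_{t=1}^T F_S(w_t+v_t) - F_S(w^\star)
    \leq
    \frac{\|w_1-w^\star\|^2}{\eta T} + 4\beta\max\{\rad-\rho,0\}^2,
\end{align*}
and because $w^\star$ is a $\rho$-flat minimum of the non-negative $F_S$, we have $F_S(w^\star)=0$.

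The one SAM-specific ingredient is that the perturbation $v_t$ points in an ascent direction. Concretely, by the gradient inequality for the convex function $F_S$,
\begin{align*}
    F_S(w_t+v_t) \geq F_S(w_t) + \langle \nabla F_S(w_t), v_t\rangle
    = F_S(w_t) + \rad\,\|\nabla F_S(w_t)\|
    \geq F_S(w_t),
\end{align*}
so $F_S(w_t)\le F_S(w_t+v_t)$ for every $t$, whence $\frac{1}{T}\sum_{t} F_S(w_t) \le \frac{1}{T}\sum_{t} F_S(w_t+v_t)$. Finally, Jensen's inequality applied to the convex $F_S$ yields $F_S(\widehat{w}) \le \frac{1}{T}\sum_{t} F_S(w_t)$. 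Chaining these three inequalities produces the claimed bound $F_S(\widehat{w}) \le \ifrac{\|w_1-w^\star\|^2}{\eta T} + 4\beta\max\{\rad-\rho,0\}^2$; substituting $\eta=\ifrac{1}{4\beta}$, $\|w_1-w^\star\|=O(1)$, and $\rad-\rho=O(\ifrac{1}{\sqrt{T}})$ gives the $O(\ifrac{\beta}{T})$ rate.

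I do not expect a genuine obstacle here, since \cref{lem:gen_regret} does the heavy lifting; the only point requiring care is the orientation of the perturbation. Unlike SA-GD, for which $v_t$ is the exact maximizer and hence $F_S(w_t+v_t)=\fsmax(w_t)$ (allowing \cref{thm:max_opt} to bound $\fsmax(\widehat{w})$), SAM's normalized ascent step is generally \emph{not} the maximizer, so we cannot identify $F_S(w_t+v_t)$ with $\fsmax(w_t)$ and can only control $F_S(\widehat{w})$ rather than the full SAER. The ascent-direction inequality above is precisely what substitutes for the maximizer identity, and it is the sole place where the specific form of the SAM update enters the proof.
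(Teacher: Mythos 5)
Your proposal is correct and follows essentially the same route as the paper's proof: apply \cref{lem:gen_regret} to the SAM update viewed as $w_{t+1}=w_t-\eta\nabla F_S(w_t+v_t)$ with $\|v_t\|=\rad$, use the convexity gradient inequality to show the normalized perturbation is an ascent direction so that $F_S(w_t)\le F_S(w_t+v_t)$, and finish with Jensen's inequality. Your additional handling of the degenerate case $\nabla F_S(w_t)=0$ is a minor refinement the paper omits, but the argument is otherwise identical.
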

The proof is deferred to \cref{proof_asc_opt_upper}. We note that \cref{thm:asc_opt_upper} establishes convergence rates in terms of the empirical risk. 
A natural question is whether SAM achieves similar rates
for the SAER. 
In the following theorem, we show that this is not the case: SAM might incur an additional term of $\Omega(\rad^2)$ in the convergence rate for the SAER, even for $\rho$-flat functions. 
This demonstrates that SAM can converge to a non-flat minimum, even when a $\rho$-flat minimum exists. 

\begin{theorem}\label{thm:ascent_opt_non_flate}
    For every $\eta> 0, n \in \mathbb{N}, \rad,\rho \leq \frac{1}{2}$, $W = [-1,1]$ there exists an instance set $\mathcal{Z}$ and a loss function $F_S:W \times \mathcal{Z} \to \mathbb{R}$ that is non-negative, convex, $1$-Lipschitz, $1$-smooth and $\rho$-flat such that running SAM on $F_S$ for $T$ steps holds, for any suffix average $\widehat{w}_{\tau} = \frac{1}{T-\tau+1}\sum_{t = \tau}^T w_t$,
    $$\forall ~ 0\leq\rad \leq \tfrac{1}{2}: \qquad \fsmax(\widehat{w}_{\tau}) - \fsmax(w^{\star}) = \Omega(\rad^2),$$
    that is, SAM converges to a sharp minimum.
\end{theorem}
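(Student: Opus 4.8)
The plan is to exhibit a one-dimensional instance on which SAM halts at the \emph{boundary} of the flat region rather than at its interior, so that the SAER gap is $\Omega(\rad^2)$. Since this is an optimization (not generalization) statement, the distribution is irrelevant: I take $\mathcal{Z}$ to be a singleton and $f(w,z)=h(w)$ for a single hard function $h$, so that $F_S=h$. Concretely I use the one-sided smoothed hinge with a linear cap,
\[
h(w)=\begin{cases} 0, & w\le \rho,\\[2pt] \tfrac12(w-\rho)^2, & \rho\le w\le \rho+1,\\[2pt] (w-\rho)-\tfrac12, & w\ge \rho+1,\end{cases}
\]
whose derivative $h'(w)=\max\{\min\{w-\rho,1\},0\}$ is $1$-Lipschitz and lies in $[0,1]$. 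Thus $h$ is convex, non-negative, $1$-smooth, globally $1$-Lipschitz, and $\rho$-flat with flat minimum at $0$. Because $h$ is non-decreasing, $\fsmax(w)=h(w+\rad)$, which equals $0$ for every $w\le \rho-\rad$; hence $\min_w\fsmax(w)=0$, and I take $w^\star$ to be such a minimizer, so $\fsmax(w^\star)=0$ and the claimed gap reduces to lower bounding $\fsmax(\widehat{w}_\tau)$.

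Next I analyze the SAM dynamics from $w_1=1$ (any $w_1\in(\rho,1]$ works). The key structural facts are: (a) whenever $w_t>\rho$ we have $h'(w_t)>0$, so the normalized ascent direction is $+1$, the perturbed point is $w_t+\rad$, and the update is the explicit map $w_{t+1}=w_t-\eta h'(w_t+\rad)$, which in the quadratic regime equals $(1-\eta)w_t-\eta(\rad-\rho)$ and strictly decreases $w_t$ while $w_t>\rho-\rad$; (b) as soon as $w_t\le \rho$ the gradient vanishes, $\nabla F_S(w_t)=0$, and SAM makes no further progress (under the standard convention that the ascent step is null when the gradient is zero), so the iterates freeze at the first entry point $\bar w$ into the flat region. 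The central quantitative claim I would prove is that each step taken from $w_t\ge \rho$ satisfies $w_{t+1}\ge \rho-\eta\rad$ (minimizing the affine map over $w_t\ge\rho$), while in the linear regime the update decreases $w_t$ by exactly $\eta\le\tfrac14$ and keeps it above $\rho$; the crossing into the flat region therefore occurs from the quadratic regime, where this bound applies. Consequently $\bar w\in[\rho-\eta\rad,\rho]$: SAM gets trapped on the \emph{sharp shoulder} $(\rho-\rad,\rho]$ and never reaches the flat interior $(-\infty,\rho-\rad]$ on which $\fsmax$ vanishes.

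Finally I convert this into the suffix-average bound. Since the iterates are non-increasing and freeze at $\bar w$, every iterate lies in $[\bar w,w_1]$, so every suffix average satisfies $\widehat{w}_\tau\ge \bar w\ge \rho-\eta\rad$. By monotonicity of $\fsmax(w)=h(w+\rad)$,
\[
\fsmax(\widehat{w}_\tau)\ \ge\ h\bigl(\rho+(1-\eta)\rad\bigr)\ =\ \tfrac12(1-\eta)^2\rad^2,
\]
so with $\eta\le\tfrac{1}{4\beta}=\tfrac14$ (the regime of \cref{thm:asc_opt_upper}) this is at least $\tfrac{9}{32}\rad^2=\Omega(\rad^2)$, while $\fsmax(w^\star)=0$, which proves the theorem.

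The main obstacle is step (b) together with the quantitative landing bound: controlling exactly where SAM halts. Everything hinges on showing SAM is trapped on the thin sharp shoulder of width $\rad$ rather than overshooting into the flat interior, which is precisely where the step-size restriction matters, since an entering step has size $\approx\eta\rad$ and must stay below the shoulder width $\rad$. A secondary point requiring care is that the perturbed points $w_t+\rad$ may leave the domain $[-1,1]$; this is exactly why $h$ is capped to a linear tail, ensuring $1$-smoothness and $1$-Lipschitzness hold globally and the SAM update stays well defined throughout.
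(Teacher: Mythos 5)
Your core mechanism is the same one the paper uses: trap SAM at a point where $\nabla F_S$ vanishes but which sits on the sharp edge of the flat set, so that $\fsmax=\Omega(\rad^2)$ there (both arguments rely on the convention that the normalized ascent step is null when the gradient is zero). The paper implements this in the laziest possible way: it takes $F_S(w)=\tfrac12\max(0,w)^2$, which is $\rho$-flat (witnessed by $w^\star=-\tfrac12$), and initializes at the kink $w_1=0$; since $F_S'(0)=0$, every iterate equals $0$, so $\fsmax(\widehat{w}_\tau)-\fsmax(w^\star)=\tfrac12\rad^2$ for every $\tau$ and \emph{every} $\eta>0$. Your version adds a genuine dynamic phase, starting from $w_1=1$ and arguing that the iterates decrease monotonically and freeze on the shoulder $[\rho-\eta\rad,\rho]$.

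This is where the gap is: the theorem quantifies over every $\eta>0$, but your construction and analysis only work when $\eta$ is bounded away from $1$ (you explicitly invoke $\eta\le\tfrac14$). Your landing bound $w_{t+1}\ge\rho-\eta\rad$ from $w_t\ge\rho$ rests on $(1-\eta)(w_t-\rho)\ge 0$, i.e.\ $\eta\le 1$, and your final bound $\tfrac12(1-\eta)^2\rad^2$ is vacuous at $\eta=1$. Beyond that the construction genuinely fails, not just the analysis: with $\eta=2$, $\rho=0$, $\rad=\tfrac12$, $w_1=1$, the perturbed point $1.5$ lies in the linear cap, so $w_2=1-2\cdot 1=-1$, deep in the flat interior; SAM freezes there with $\fsmax(w_2)=h(-\tfrac12)=0$, so the last-iterate suffix average ($\tau=T$) has zero SAER gap. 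The fix is precisely the paper's move: initialize at the kink $w_1=\rho$ itself, where the gradient already vanishes, so SAM never moves regardless of $\eta$ and the dynamic phase---which is exactly what forces your step-size restriction---becomes unnecessary. As written, your proof establishes the theorem only under the additional hypothesis $\eta\le\tfrac14$ (the regime of \cref{thm:asc_opt_upper}), which is strictly weaker than the stated result.
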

The proof of \cref{thm:ascent_opt_non_flate} is deferred to \cref{sec:ascent_opt_non_flate}.
Finally, we turn to discuss the generalization guarantees of SAM.
In the following lower bound, we show that SAM can exhibit poor generalization in SCO under the realizable setting ($\rho = 0$), leaving the $\rho$-flat case ($\rho \gg 0$) for future work. 
\begin{theorem}\label{thm: gen asc sam}
    Given $n \geq 6 ,T \geq 6, \eta,\rad>0$ such that $\eta\rad \leq 1/2\sqrt{T}$, let $d = 2^n T$ and $W = \{w \in \mathbb{R}^d: \|w\| \leq 1\}$. 
    Then there exists an instance set $\mathcal{Z}$, a distribution $\mathcal{D}$ over $\mathcal{Z}$, a convex $6$-smooth $6$-Lipschitz and realizable function $f: W \times \mathcal{Z} \to \mathbb{R}$ such that for a training set $S$ with probability at least $\frac{1}{3}$ running SAM for $T$ steps with trajectory $\{w_t\}_{t=1}^T$, yields for every $\tau \in [T]$ suffix average $\widehat{w}_{\tau} = \frac{1}{T-\tau +1} \sum_{t=\tau}^T w_t$:
    \[F(\widehat{w}_{\tau}) - F(w^{\star}) = \Omega(\eta^2 \rad^2 T).\]
\end{theorem}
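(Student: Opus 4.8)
The plan is to mirror the strategy behind \cref{thm: gen max sam} for SA-GD, specializing to $\rho=0$ and replacing the exact maximizer $v_t$ by SAM's normalized ascent direction. As a building block I would reuse (the $\rho=0$ case of) the hard instance from \cref{thm: bad flat erm}: a realizable, convex, $O(1)$-smooth, $O(1)$-Lipschitz loss on a single block $\R^{2^n}$ for which, with constant probability over the sample, there is a ``spurious'' coordinate $e_{i^\star}$ such that (a) the population minimizer sits at the origin with $F=0$; (b) the empirical gradient at the origin is nonzero and $-\nabla F_S$ points toward $e_{i^\star}$, so a gradient step moves toward the spurious empirical optimum; and (c) displacement $a$ along $e_{i^\star}$ keeps the empirical loss minimal while incurring population excess risk $\Omega(a^2)$. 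The first thing to verify is that SAM's two-point update reproduces the SA-GD behavior on this block: since $h(x)=\tfrac12\max(x,0)^2$ has gradient $h'(x)=\max(x,0)$ growing linearly past the kink, the ascent step of length $\rad$ amplifies the gradient so that a single SAM update displaces the iterate by $\Theta(\eta\rad)$ toward $e_{i^\star}$.

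Next I would tensorize this instance into $T$ orthogonal blocks, giving $d=2^nT$, with the data carrying independent randomness per block so that the per-block spurious-coordinate events are independent. The crucial and most delicate point---where SAM departs from SA-GD---is serialization: for SA-GD the maximizer $v_t$ can be forced into a single fresh block, whereas SAM merely normalizes $\nabla F_S(w_t)$, which at a symmetric configuration spreads the unit ascent direction across all blocks and moves each by $\Theta(\eta\rad/\sqrt T)$ per step, over-accumulating rather than producing one clean $\Theta(\eta\rad)$ step per block. I therefore need to engineer the loss with a block-asymmetry or gating structure (a deterministic ``staircase'' on the block geometry) that keeps $\nabla F_S(w_t)$ concentrated in a single block at iteration $t$, so that the normalized ascent direction is essentially a unit vector inside block $t$ and SAM performs exactly one $\Theta(\eta\rad)$ step per block. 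Because this gating depends only on the data-independent \emph{magnitude} of progress in earlier blocks, it determines \emph{when} each block is active without affecting \emph{whether} its (data-dependent) spurious coordinate is bad, so it preserves both the serialization and the independence of the success events across blocks.

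With serialization in hand, I would run an induction over $t$ showing that, on the event that block $t$'s spurious coordinate is genuinely bad for generalization, iteration $t$ displaces the iterate by $\Theta(\eta\rad)$ along $e_{i^\star_t}$ in block $t$, while blocks $1,\dots,t-1$ stay essentially frozen (their gradients being negligible once pushed) and blocks $t+1,\dots,T$ remain dormant. The constraint $\eta\rad\le 1/(2\sqrt T)$ guarantees $\|w_t\|^2\le T(\eta\rad)^2\le \tfrac14$ throughout, so the projection onto $W$ never activates and the linear approximation of $h'$ near the kink stays valid. Since the per-block success events are independent, each with constant probability (at least $\tfrac12$ by \cref{thm: bad flat erm}), a Chernoff/concentration bound yields that a constant fraction $\Omega(T)$ of the blocks are successfully pushed with probability at least $\tfrac13$.

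Finally I would convert accumulated displacement into population risk for every suffix average simultaneously. After the run, $w_t$ carries a $\Theta(\eta\rad)$ displacement in the bad coordinate of each successful block $k<t$, so in such a block the suffix average $\widehat w_\tau=\tfrac{1}{T-\tau+1}\sum_{t=\tau}^T w_t$ has displacement $\Theta(\eta\rad)$ when $k<\tau$ and $\Theta\bigl(\tfrac{T-k}{T-\tau+1}\,\eta\rad\bigr)$ when $k\ge\tau$. Summing the per-block quadratic population contributions over the successful blocks gives $\Theta(\eta\rad)^2\bigl[(\tau-1)+\tfrac13(T-\tau+1)\bigr]=\Omega(\eta^2\rad^2T)$ uniformly in $\tau$, the point being that the linear growth of accumulated displacement dominates the $1/(T-\tau+1)$ averaging factor. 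Combined with $F(w^\star)=0$, this yields the claimed $F(\widehat w_\tau)-F(w^\star)=\Omega(\eta^2\rad^2T)$. I expect the serialization step---engineering and verifying that SAM's \emph{normalized} update concentrates in one block per iteration, robustly enough to survive suffix averaging---to be the main obstacle; the base instance and the suffix-average arithmetic are comparatively routine adaptations of \cref{thm: bad flat erm,thm: gen max sam}.
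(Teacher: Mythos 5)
Your plan founders on its very first building block: property (b) of the base instance is false. In the $\rho=0$ instance of \cref{thm: bad flat erm}, the loss is $f(w,z)=\tfrac12\bigl(\sum_i z(i)w(i)^2+w(d)^2\bigr)$, so on the event that all training samples have $z_k(i^\star)=0$, the empirical risk $F_S$ does not depend on $w(i^\star)$ at all. Hence $\nabla F_S$ has \emph{identically zero} component along $e_{i^\star}$ (not just at the origin, but everywhere), and no gradient-based dynamics --- SAM included --- can ever acquire a component in the spurious direction. The ``amplification by normalization'' you invoke cannot help, because there is nothing in that direction to amplify. The same obstruction persists if you switch to the one-sided $\{\pm1\}$-encoded instance of \cref{thm: gen max sam}: there the drift into the spurious block is created by the max-oracle $v_t$, which SAM does not have; the hinge gradients vanish on the half-space where the iterates live. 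This is exactly the difficulty the paper flags (a naive port of the SA-GD construction leaves SAM frozen at $w_1=0$), and your proposal never supplies the missing mechanism. Worse, your proposed fix --- a gating structure that ``depends only on the data-independent magnitude of progress'' --- cannot work even in principle: for any data-independent direction, with $n\ge 6$ samples the empirical penalty along that direction agrees with the population penalty except with probability $2^{-n}$, so movement in data-independent directions can never be simultaneously empirically free and population-costly. The steering of the iterates must itself read the sample.

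The paper's proof supplies precisely this data-dependent steering, and it is the heart of the argument, not a routine adaptation. The loss has three parts: a quadratic $f_1$ that penalizes (on the population) movement in block $I$; a tiny kick term $f_3=\tfrac{\gamma}{2}[v_z^\top w+\delta_1]_+^2$ whose vector $v_z$ sign-encodes the data ($+1$ on coordinates with $z(i)=1$, a small negative value where $z(i)=0$), so that its minuscule but nonzero gradient at $w_1=0$, once \emph{normalized}, becomes a full $\rad$-length ascent step whose sign pattern activates the dynamics only in the unique spurious block and deactivates all others (\cref{lemma: asc sam proof first update}); and a staircase term $f_2$ with hinges $[\,w^{(i)}(j)-\delta_j(w^{(i)}(j-1)+\lambda\mathbbm{1}[j=2])\,]_+^2$, so that each step's negative displacement in coordinate $j-1$ opens the hinge for coordinate $j$ and produces the next $\Theta(\eta\rad)$ step (\cref{lemma: asc sam proof g}). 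Note also the structural difference from your plan: the paper serializes $T$ steps across $T$ \emph{coordinates of a single spurious block} (one block, probability $\ge 1/e$ of exactly one all-zeros data coordinate --- no Chernoff over blocks needed), rather than one step in each of $T$ independent blocks. Your suffix-averaging arithmetic at the end is fine, but it sits on top of a construction that does not exist; to repair the proposal you would have to invent the kick-and-chain mechanism, at which point you have essentially reproduced the paper's construction.
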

We provide here a proof sketch, the full proof is deferred to \cref{sec:proof_gen_asc_sam}.
\begin{proof}[sketch]
As with the construction for SA-GD in \cref{thm: gen max sam} we begin with a loss that admits a spurious ERM and $T$ orthogonal subspaces, 
$$f_1(w,z) = \frac12\sum_{i=1}^{2^n}\sum_{t=2}^{T} z(i)\,w^{(i)}(t)^2.$$ 
The main difficulty in this context is that the algorithm is initialized at $w_1 = 0$, which, in previous constructions,  is already a minimizer of the empirical risk. As a result, if we were to apply the same approach, SAM would remain at initialization throughout training and thus generalize well.  

To overcome this challenge, our key idea is to exploit the normalization of the ascent step, which can amplify small perturbations into meaningful progress. We begin by introducing a sufficiently small linear loss in the first orthogonal subspace,
\[
f_2(w,z) = \frac{\gamma}{2}\,[\,v_z^\top w^{(1)} + \delta_1\,]_+^{2},
\]
for an appropriate choice of \(v_z\) depending on the samples \(z\), with small \(\delta_1,\gamma>0\). 
Although the gradients of this function at the initialization point are small, the normalization step amplifies them, producing a progress of $\eta \rad$ toward the bad ERM in this subspace. To extend this effect across \(T\) orthogonal subspaces, we design a chaining mechanism that couples consecutive subspaces, such that progress in one subspace activates progress in the next. This is achieved via the following function:
\begin{align*}
    f_3(w) =  \tfrac{1}{2}\sum_{i=1}^{2^n}\sum_{t=2}^{T} 
\left[w^{(i)}(t)-\delta_t\cdot w^{(i)}(t-1),0\right]_+^{\!2},
\end{align*}
for an appropriate choice of \(\{\delta_t\}_{t=2}^T\). This construction ensures that the algorithm makes progress of order \(\eta \rad\) in each of \(T\) distinct subspaces, ultimately guiding the iterates toward a spurious ERM that generalizes poorly. 
The final loss function is therefore
\begin{align*}
f(w,z) = f_1(w,z) + f_2(w) + f_3(w,z). \tag*{\qedhere}
\end{align*}  
\end{proof}

Finally, we establish an upper bound for the population loss achieved by SAM in the following theorem. 

\begin{theorem}
\label{thm:ascent_gen_upper}
For every $z$, assume that $f(w,z)$ is
$\beta$-smooth, convex, non-negative and $\rho$-flat. Let $\{w_t\}_{t=1}^T$ be produced by SAM for $T$ steps
(\cref{asc_update_rule}) with $\eta \leq \ifrac{1}{4\beta}$ and $\rad >0$. For $\widehat{w} \coloneqq \frac{1}{T}\sum_{t=1}^T w_t$, it
holds that\begin{align*}
    \mathbb E F (\widehat{w}) 
    \leq
    O& \bigg[\frac{\|w_1-w^\star\|^2}{\eta T} + \eta \beta^2 \rad  ^2  T+ \frac{\beta^2\eta T}{n^2} +\bigg(\beta+\frac{\beta^3\eta^2 T^2}{n^2}\bigg)\max\{\rad - \rho, 0\}^2 \bigg]
    .
\end{align*}
In particular for  $T=n, \eta=O(\ifrac{1}{\beta})$, $\|w_1-w^\star\|=O(1)$ and $\rad-\rho=O(\ifrac{1}{\sqrt{T}})$ it holds that,
\begin{align*}
     \mathbb E F \left(\widehat{w}\right) =O\left(\frac{\beta}{n}+ \beta\rad^2 n \right).
\end{align*}
\end{theorem}

As in the bound for SA-GD, we note that when $\rad = 0$, the bound in \cref{thm:max_gen_upper} coincides with the risk bounds of \citet{ lei2020fine,nikolakakis2022beyond} for GD and SGD in convex, smooth, realizable settings. However, when $\rad > 0$, our bound contains an additional excess term of $\eta \beta^2 r^2 T$ relative to GD and SGD. This term nearly matches our lower bound in \cref{thm: gen asc sam}, up to its dependence on $\eta$.

The proof follows from an algorithmic stability analysis, similar to that in
\cref{thm:max_gen_upper};, and is given in \cref{sec:proof_ascent_gen_upper}.

\section{Discussion and Limitations}

In this work, we study the relationship between flat minima and generalization. We focus on the fundamental convex and smooth setting and provide the first upper and lower bounds on both optimization and generalization for three natural, extensively studied sharpness-aware methods: SA-ERM, SA-GD, and SAM.
To the best of our knowledge, our work provides the first provable separation showing that, even in convex problems, sharpness-aware algorithms can exhibit worse generalization than standard GD and SGD, and that explicitly seeking flat minima does not necessarily improve performance and can, in fact, lead to worse outcomes. 

\paragraph{Limitations.}
It is important to note that our separation between GD and SGD versus SA-GD and SAM is shown in a worst case setting. It relies on a specific loss function, data distribution, gradient oracle, and particular hyperparameter choices such as large step sizes. 
This does not mean that sharpness aware methods always perform worse than GD or SGD. Rather, it shows that explicitly aiming for flat minima does not necessarily improve population risk performance (in the sense that establishing stronger risk upper bounds is impossible) and can sometimes even hurt it significantly.
In addition, our analysis is limited to the convex setting and uses a strong notion of flatness. These assumptions actually make our lower bounds stronger, since they show that even in this simple setting and under a very strict definition of flatness, targeting flat solutions can still lead to worse performance.

\paragraph{Open questions and future work.}
Although our work is the first to discuss formally the limitations of sharpness-aware minimization in SCO, many questions remain open. 
Our lower-bounds constructions require a dimension that is exponential in the size of the training set, and reducing this dimensional dependence is an important open problem. Our result for SA-ERM shows that an arbitrary flat ERM solution may overfit, even though our construction also admits flat ERMs with good generalization. An interesting open question is whether one can construct instances in which all flat ERM solutions generalize poorly. Another direction for future work is to close the remaining gap between our upper and lower bounds. Further, our lower bounds for SA-GD and SAM focus on regimes where the perturbation radius of the algorithm exceeds the true flatness radius of the loss, understanding what happens when the perturbation radius is smaller than or comparable to the flatness radius remains open. Finally, it would be interesting to investigate whether similar phenomena arise for other variants of SAM.

\section*{Acknowledgments}

This project has received funding from the European Research Council (ERC) under the European Union’s Horizon 2020 research and innovation program (grant agreement No.\ 101078075).
Views and opinions expressed are however those of the author(s) only and do not necessarily reflect those of the European Union or the European Research Council. Neither the European Union nor the granting authority can be held responsible for them.
This work received additional support from the Israel Science Foundation (ISF, grant numbers 2549/19 and 3174/23), a grant from the Tel Aviv University Center for AI and Data Science (TAD) and
from the Len Blavatnik and the Blavatnik Family foundation.

In addition, this work was partially supported by the TAD Excellence Program for Doctoral Students in Artificial Intelligence and Data Science from the Tel Aviv University Center for AI and Data Science (TAD).

\bibliography{ref}
\newpage
\appendix

\section{Proofs for Section~\ref{sec:sa_erm}}\label{sec:proof_main} \label{sec:proof_sa_erm}
\begin{proof}[of \cref{thm: bad flat erm}]
Let $d= 2^n + 1,\;\mathcal{Z}= \{0,1\}^{2^n}$ and let $\mathcal{D}$ be the uniform distribution over $\mathcal{Z}$. Consider the following function: 
$$f(w,z) = \frac{1}{2}\max\left\{ \sqrt{\sum_{i=1}^{2^n} z(i)w(i)^2 + w(d)^2} - \rho,0 \right\}^2.$$
We show that $f$ is convex, $1$-Lipschitz, $1$-smooth and has flatness radius $\rho$ in \cref{lemma: well defined}. Since the samples are uniform over $\{0,1\}^{2^n}$ we have that for a random training set $S = \{z_1, \dots, z_n\} \overset{\text{i.i.d.}}{\sim} \mathcal{D}^n$ with probability greater than $1-e^{-1} > \frac{1}{2}$, there exists an index $I \in [2^n]$ such that for every $z\in S, \; z(I) =0$. From now on we will assume the existence of this $I$ and denote $w^{(1)} = e_I$. We will show $w^{(1)} \in \arg\min_{w \in W} F_S^{\rad}(w)$. First we notice that: 
$$F_S^{\rad}(w,z) \geq F_S(w + \text{sign}(w(d)) \cdot \rad e_{d}) \geq \frac{1}{2}\max\{\rad - \rho ,0\}^2. $$

We will show that $w^{(1)}$ achieve that minimum. Indeed, from the choice of $I$, $F_S(e_I) = 0$, so for any $\|v\| \leq \rad,$
\begin{align*}
    F_S(e_I + v) & = \frac{1}{2} \max\left\{ \|(e_I + v) \odot z\| - \rho,0 \right\}^2 \\
    & = \frac{1}{2} \max\left\{ \|v \odot z\| - \rho,0 \right\}^2 \\
    & \leq \frac{1}{2} \max\left\{ \|v\| - \rho,0 \right\}^2 \leq \frac{1}{2} \max\left\{ \rad - \rho,0 \right\}^2,  
\end{align*}
which concludes the proof. Finally, since with probability $\frac{1}{2}$ a new sample $z'$ will hold $z'(I) = 1$:
$$F(e_I) - F(w^\star) \geq \frac{1}{4} \cdot (1-\rho)^2 + \frac{1}{2} \cdot 0 \geq \frac{1}{16} = \Omega(1),$$
where the last inequality holds since $\rho \leq \frac{1}{2}$.
This concludes the results for $w^{(1)}$. For $w^{(2)}$ consider $w^{(2)} = \rho e_d$. It is easy to see $w^{(2)}$ is a minimum of $F_S$ and for any $\delta > 0$: 
$$F_S^{\delta}(\rho e_d) \geq F_S((\rho + \delta)e_d) = \frac{1}{2} (\rho + \delta - \rho)^2 = \frac{\delta^2}{2},$$ 
which shows $w^{(2)}$ is a sharp minimum. But,
$$F(\rho e_I) - F(w^{\star}) = \frac{1}{2}\max\{\rho - \rho\}^2 - 0 = 0.$$
which concludes the proof.
\end{proof}

\begin{lemma}\label{lemma: well defined}
    Fix some $z \in [-1,1]^{d-1} \times \{1\}$, and $\rho \geq 0$. Define the following function: 
    $$\phi_z(w) = \frac{1}{2}\left[ \|\left[w \odot z\right]_+\| - \rho\right]_+^2,$$
    then $\phi_z$ is convex, $1$-Lipschitz, 1-smooth in the unit ball, and $\rho$-flat.
\end{lemma}
\begin{proof}[of \cref{lemma: well defined}] We will prove each property separately.

\paragraph{Convexity.} Notice that $\|[w \odot z]_{+}\|$ is convex and the function $\psi(x) = \max\{x-\rho,0\}^2$ is convex and non-decreasing, hence the composition $\phi_z = \psi \circ \xi_z$ is convex. 

\paragraph{Lipschitz continuity.} We will start by computing the gradient. 
\begin{align*}
    \|\nabla \phi_z(w)\| & = \left\|\frac{\left[\|[w \odot z]_+\| - \rho,0 \right]_+}{\|[w \odot z]_+\|} \cdot \left(z \odot [w\odot {z}]_+ \right)\right\| \\
    & \leq \|z \odot [w \odot z]_+\| \leq \|z\|_{\infty} \|[w \odot z]_+\| \leq \|z\|_{\infty}\|w\| \\
    & \leq \|w\| \leq 1, \tag{$z\in [-1,1]^d$}
\end{align*}
where the last inequality comes from the choice of the domain to be the unit ball.

\paragraph{Smoothness.} For $x,y\in\mathbb{R}^d$,
\begin{align*}
    \|\nabla \phi_z(x)-\nabla \phi_z(y)\| & = \left\|\, z \odot \left(\frac{\big[\|x \odot z\|-\rho\big]_+}{\|x \odot z\|}\,[x \odot z]_+ - \frac{\big[\|y \odot z\|-\rho\big]_+}{\|y \odot z\|}\,[y \odot z]_+\right)\right\| \\
    & \leq\|z\|_{\infty} \cdot  \left\| \frac{\big[\|x \odot z\|-\rho\big]_+}{\|x \odot z\|}\,[x \odot z]_+ - \frac{\big[\|y \odot z\|-\rho\big]_+}{\|y \odot z\|}\,[y \odot z]_+ \right\|\\
    &  \leq \left\| \frac{\big[\|x \odot z\|-\rho\big]_+}{\|x \odot z\|}\,[x \odot z]_+ - \frac{\big[\|y \odot z\|-\rho\big]_+}{\|y \odot z\|}\,[y \odot z]_+ \right\|. \tag{$z\in[-1,1]^d$}
\end{align*}
Denote
\[
T(u):=\frac{[\|u\|-\rho]_+}{\|u\|}\,u,\qquad (T(0):=0),
\]
so the last norm is $\|T([x \odot z]_+)-T([y \odot z]_+)\|$.
Note the identity
$$
T(u)=u-\Pi_{B_\rho}(u),\qquad B_\rho:=\{v\in \mathbb{R}^d:\|v\|\le \rho\},
$$
where $\Pi_{B_\rho}$ is the Euclidean projection onto $B_\rho$. Also, using the fact that Euclidean projection is firmly nonexpansive, for any $u,v \in \mathbb{R}^d$:
\begin{align*}
 \|T(u)-T(v)\|^2 & = \|\,u-\Pi_{B_\rho}(u) - (v -\Pi_{B_\rho}(v))\,\|^2 \\
& = \|u - v\|^2 + \| \Pi_{B_\rho}(u) - \Pi_{B_\rho}(v)\|^2 - 2 \langle u-v , \Pi_{B_\rho}(u) - \Pi_{B_\rho}(v) \rangle \\
& \leq \|u-v\|^2 + \| \Pi_{B_\rho}(u) - \Pi_{B_\rho}(v)\|^2 - 2 \| \Pi_{B_\rho}(u) - \Pi_{B_\rho}(v)\|^2\\
& = \|u-v\|^2 - \| \Pi_{B_\rho}(u - \Pi_{B_\rho}(v)\|^2  \\
& \leq \|u-v \|^2 
\end{align*}
Combining the inequalities we showed
\begin{align*}
   \|\nabla \phi_z(x)-\nabla \phi_z(y)\|  & \leq \|[x \odot z]_+ - [y \odot z]_+\|\\
   & \leq \|x \odot z - y\odot z\| \tag{$[ \cdot]_+$ is 1-Lipschitz}\\
   & \leq \|z\|_{\infty} \|x-y\| \leq \|x-y\| \tag{$z\in [-1,1]^d$}. 
\end{align*}
This concludes the proof for smoothness. 

\paragraph{Flatness.} For $\rho$ flatness we can easily see that for any $\|v\|\leq \rho$ the following:
$$\phi_z(0+v, \rho) = \phi_z(v,\rho) = \frac{1}{2}\left[|[v \odot z]_+\| - \rho\right]_+^2 \leq \frac{1}{2} [\|v\| -\rho]_+^2 = 0$$
It is left to show that $\rho$ is the maximum flatness. Indeed, for every $w\in\arg\min \phi_z$: 
$$\phi_z(w + \text{sign}(w(d))\cdot ce_{d}) \geq \frac{1}{2}\max\{c - \rho, 0\}^2.$$
This implies that for $c > \rho$ we will have $\phi_z^{c} > 0$.
\end{proof}

\section{Proofs for Section ~\ref{sec:sa_gd}}\label{sec:proof_sa_gd}

\subsection{Proof of Theorem~\ref{thm:max_opt}}\label{sec:proof_max_opt}

In the proofs, we use the following standard lemma (e.g., \cite{srebro2010smoothness}).
\begin{lemma} \label{lem:2L_descent}  
For a non-negative and $\beta$-smooth $f:\R^d \to \R$, it holds that $\|\nabla
f(w)\|^2 \leq 2\beta f(w)$ for all $w\in \R^d$.
\end{lemma}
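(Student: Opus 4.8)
The plan is to derive the inequality directly from the quadratic upper bound implied by $\beta$-smoothness, together with the non-negativity of $f$. First I would recall the standard descent-type estimate that $\beta$-smoothness provides, namely that for all $u, w \in \R^d$,
\begin{equation*}
f(u) \leq f(w) + \langle \nabla f(w),\, u - w\rangle + \frac{\beta}{2}\|u - w\|^2,
\end{equation*}
which follows by integrating the Lipschitz bound on $\nabla f$ along the segment joining $w$ to $u$. This is the only structural consequence of smoothness that I need.

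The key step is then to evaluate this bound at the gradient-step point $u = w - \tfrac{1}{\beta}\nabla f(w)$, which is precisely the minimizer of the right-hand side over $u$. Substituting yields
\begin{equation*}
f\!\left(w - \tfrac{1}{\beta}\nabla f(w)\right) \leq f(w) - \frac{1}{\beta}\|\nabla f(w)\|^2 + \frac{1}{2\beta}\|\nabla f(w)\|^2 = f(w) - \frac{1}{2\beta}\|\nabla f(w)\|^2,
\end{equation*}
so that the chosen step decreases the value by at least $\tfrac{1}{2\beta}\|\nabla f(w)\|^2$.

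Finally I would invoke non-negativity: since $f \geq 0$ everywhere, the left-hand side above is at least $0$, giving $0 \leq f(w) - \tfrac{1}{2\beta}\|\nabla f(w)\|^2$, which rearranges immediately to the claimed bound $\|\nabla f(w)\|^2 \leq 2\beta f(w)$.

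There is no genuine obstacle here, as this is a textbook fact; the only points meriting attention are the choice of step size $1/\beta$, which is exactly what makes the quadratic upper bound tightest and extracts the constant $2\beta$, and the essential use of non-negativity of $f$ on the whole space, without which no such gradient-to-value bound could hold.
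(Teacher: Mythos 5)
Your proof is correct: applying the quadratic upper bound from $\beta$-smoothness at the point $w - \tfrac{1}{\beta}\nabla f(w)$ and invoking non-negativity is exactly the canonical argument for this fact. The paper itself offers no proof, presenting the lemma as standard with a citation to \citet{srebro2010smoothness}, and your derivation is precisely the argument that citation refers to, so there is nothing to reconcile.
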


 \begin{proof} [of \cref{lem:gen_regret}]
     By \cref{ass:flat} we know that there exists a model $w^\star$ such that for every $\|v\| \leq \rho$, it holds that $F_S(w^{\star}) = F_S(w^{\star} + v)=0$. 
By \cref{lem:2L_descent} and Young's inequality, since for every $t$, we know that $w_{t+1}=w_t-\eta\nabla F_S(w_t+v_t)$, it holds for every $\gamma>0$ that,
\begin{align*}
    &\|w_{t+1}-w^\star\|^2
    \leq \|w_{t}-w^\star\|^2 -2\langle \eta \nabla F_S (w_{t}+v_{t}), w_{t}-w^\star \rangle
    +\eta^2\|\nabla F_S (w_{t}+v_{t})\|^2
     \\&\leq \|w_{t}-w^\star\|^2 -2\bigg\langle \eta \nabla F_S (w_{t}+v_{t}), w_{t}+v_t-w^\star - \min\{\rho,\|v_t\|\} \frac{v_t}{\|v_t\|} \bigg\rangle
     \\
     & \qquad +2\bigg\langle \eta \nabla F_S (w_{t}+v_{t}), v_t - \min\{\rho,\|v_t\|\} \frac{v_t}{\|v_t\|} \bigg\rangle  
    +2\eta^2\beta F_S (w_{t}+v_{t}) - 2\eta^2\beta F_S (w^\star)
    \\&\leq \|w_{t}-w^\star\|^2 -2\eta  F_S (w_{t}+v_{t}) + 2\eta F_S(w^\star)
     +\frac{1}{\gamma}\eta ^2\|\nabla F_S (w_{t}+v_{t})\|^2+ \gamma\max\{\rad - \rho, 0\}^2
    \\
     & \qquad 
     +2\eta^2\beta F_S (w_{t}+v_{t}) - 2\eta^2\beta F_S (w^\star).
     \end{align*}
     For $\gamma=4\eta\beta$ and $\eta\leq \frac1{4\beta}$, we get that, 
     \begin{align*}
     \|w_{t+1}-w^\star\|^2&\leq\|w_{t}-w^\star\|^2 -2\eta  F_S (w_{t}+v_{t}) + 2\eta F_S(w^\star)
     +\frac{\eta}{4\beta}\|\nabla F_S (w_{t}+v_{t})\|^2\\&\qquad+ 4\eta\beta\max\{\rad - \rho, 0\}^2 
    +2\eta^2\beta F_S (w_{t}+v_{t}) - 2\eta^2\beta F_S (w^\star)
    \\&\leq \|w_{t}-w^\star\|^2 -2\eta  F_S (w_{t}+v_{t}) + 2\eta F_S(w^\star)
     +\frac{\eta}{2} F_S (w_{t}+v_{t}) -  \frac{\eta}{2} F_S (w^\star) + 
    \\&\qquad 4\eta\beta\max\{\rad - \rho, 0\}^2 +2\eta^2\beta F_S (w_{t}+v_{t}) - 2\eta^2\beta F_S (w^\star) \tag{\cref{lem:2L_descent}}
    \\&\leq \|w_{t}-w^\star\|^2+ 4\eta\beta\max\{\rad - \rho, 0\}^2 -\eta  F_S (w_{t}+v_{t}) + \eta F_S(w^\star)
\end{align*}
Averaging from $1$ to $T$ and rearragining, we get the lemma.
 \end{proof}

\begin{proof} [of \cref{thm:max_opt}]
Let $\Bar{v}=\arg\max_{\|v\|\leq \rad} F_S(v+\frac{1}{T}\sum_{t=1}^T w_t)$, thus, by \cref{lem:gen_regret}, using Jensen inequality, we get
\begin{align*}
 \fsmax\left(\frac{1}{T}\sum_{t=1}^T w_t\right)
     &=\fsmax\left(\frac{1}{T}\sum_{t=1}^T w_t\right)-F_S(w^\star)
    \\&=F_S\left(\frac{1}{T}\sum_{t=1}^T w_t+\bar{v}\right)-F_S(w^\star)
    \\&\leq
    \frac{1}{T}\sum_{i=1}^T F_S\left( w_t+\bar{v}\right)-F_S(w^\star)
    \\&\leq
    \frac{1}{T}\sum_{i=1}^T F_S\left( w_t+v_t\right)-F_S(w^\star)
    \\&\leq 
    \frac{\|w_1-w^\star\|^2}{\eta T} + 4\beta\max\{\rad - \rho, 0\}^2.
\end{align*}
\end{proof}

\subsection{Proof of Theorem~\ref{thm: gen max sam}}\label{sec:proof_gen_max_sam}

\begin{proof}[of \cref{thm: gen max sam}]
Let $\mathcal{Z}= \{-1,1\}^{2^n}$ and $\mathcal{D}$ to be the uniform distribution over $\mathcal{Z}$. For $i \in [T]$ denote $w^{(i)} = w[T\cdot (i-1) + 1 : T \cdot i]$. Consider the following function:
\begin{align*}\label{eq: f for sam max with flatness}
    f(w,z) & = \frac{1}{2}\max \left\{\sqrt{\sum_{i=1}^{2^n} \sum_{j=1}^T \max \left\{z(i)w^{(i)}(j),0 \right\}^2 + w(d)^2} - \rho,0 \right\}^2.
\end{align*}
we prove that $f$ is convex, $1$-Lipschitz, $1$-smooth and has flatness radius $\rho$ in \cref{lemma: well defined}. 
From the definition of $\mathcal{D}$, for a sample $z \sim \mathcal{D}$ the coordinates $z(i)$ are i.i.d. uniform Bernoulli. For a random training set $S = \{z_1, \dots, z_n\} \overset{\text{i.i.d.}}{\sim} \mathcal{D}^n$, $S \subseteq \{0,1\}^{2^n}$, we have that with probability greater than $1-e^{-1} > \frac{1}{2}$, there exists a coordinate $I$ such that all the examples in the sample are $1$ on this coordinate, that is $z(I) = 1$ for all $z\in S$. Define the following SAM-gradient-oracle which at step $t$ outputs: 
$$O_S^t(w) = \frac{1}{n} \sum_{i=1}^n \nabla f(w + e_{I_t},z_i),$$
for $I_t = I + t - 1$. We will prove correctness by induction on $t$. For $w_1 = 0$ for every $\|v\| \leq \rad$ the following holds:
\begin{align*}
    \frac{1}{n} \sum_{k=1}^n f(0+v,z_k) &= \frac{1}{2n} \sum_{k=1}^n \max \left\{\sqrt{\sum_{i=1}^{2^n} \sum_{j=1}^T\max \left\{z_k(i) (0 + v^{(i)}(j)),0 \right\}^2} - \rho,0 \right\}^2 \\
    & \leq \frac{1}{2n} \sum_{k=1}^n \max \left\{\sqrt{\sum_{i=1}^{2^n} \sum_{j=1}^T\max \left\{v^{(i)}(j),0 \right\}^2} - \rho,0 \right\}^2 \\
    & \leq \frac{1}{2n} \sum_{k=1}^n \max \left\{\|v\| - \rho,0 \right\}^2 \leq \frac{1}{2}(\rad - \rho)^2.
\end{align*}
Also for $I_1$ chosen by the oracle: 
\begin{align*}
    \frac{1}{2n} \sum_{k=1}^n f(0+e_{I},z_k) & = \frac{1}{2n} \sum_{k=1}^n \max \left\{\sqrt{\sum_{i=1}^{2^n} \sum_{j=1}^T\max \left\{z_k(i) (0 + v^{(i)}(j)),0 \right\}^2} - \rho,0 \right\}^2 \\
    & = \frac{1}{2n} \sum_{k=1}^n \max \left\{\sqrt{\max \left\{0 + v(I),0 \right\}^2} - \rho,0 \right\}^2 = \frac{1}{2}(\rad - \rho)^2,
\end{align*}
this concludes the base case. For the induction step we can notice that in step $t$ it holds that $w(i) \leq 0$ for every $i$ and $w(I_t) = 0$ thus the same steps as the base case complete the proof. To see no projections take place we note that by definition:
$$\textstyle
O_S^t(w_t) = \frac{1}{2n} \sum_{k=1}^n 2\left(\sqrt{z_k(I_t)(w_t(I_t) + \rad)^2}  - \rho \right)\cdot \frac{z \odot ([w + \rad e_{I_t}]_+)}{\sqrt{z_k(I_t)(w_t(I_t) + \rad)^2}}  = (\rad - \rho) \cdot \frac{\rad e_{I_t}}{\rad} = (\rad - \rho)e_{I_t}.$$
This implies that at time $t$: 
$$w_t(i) = \begin{cases}
    -\eta(\rad - \rho) & i \in \{I_j\}_{j=1}^{t-1} \\
    0 & \text{o.w.}
\end{cases}.$$
Since $\eta (\rad - \rho) \leq \frac{1}{\sqrt{T}}$, we stay inside the unit ball for the entire run of the algorithm. This dynamics also imply that for every $\tau\in[T]$ suffix average $\widehat{w}_{\tau} = \frac{1}{T-\tau+1}\sum_{t = \tau}^T w_t$ and $s \leq \frac{T}{2}$ the following holds: 
\begin{align*}
    \widehat{w}_{\tau}(I_s) & = \frac{1}{T-\tau+1}\sum_{t = \tau}^T w_t(I_s) \leq \frac{1}{T-\tau+1}\sum_{t = \max\{\tau, T/2\}}^T w_t(I_s) \\
    & \leq \frac{T - \max\{\tau, T/2\} +1}{T - \tau + 1} \left(-\eta (\rad - \rho) \right) \leq - \frac{\eta (\rad - \rho)}{2}.
\end{align*}
With probability $\frac{1}{2}$ a new sample $z'$ will hold $z(I) = -1$ which gives:
\begin{align*}
    F(\widehat{w}_{\tau}) - F(0) & \geq \frac{1}{4} \max \left\{\sqrt{\sum_{t=1}^T \widehat{w}_{\tau}^{(I)}(t)^2 } - \rho,0 \right\}^2 \geq \frac{1}{4} \max\left\{ \frac{\eta(\rad - \rho)}{2} \sqrt{\frac{T}{2}}  - \rho,0 \right\}^2 \\
    & \geq  \frac{1}{4} \max\left\{ \frac{\eta(\rad - \rho)}{2} \sqrt{\frac{T}{2}}  - \frac{\eta (\rad - \rho) \sqrt{T}}{3},0 \right\}^2  \tag{$\rho \leq \rad - \frac{3\rad}{3+\eta \sqrt{T}}$} \\
    & \geq \frac{1}{4\cdot 100^2}\eta(\rad - \rho) \sqrt{T} = \Omega(\eta^2 (\rad - \rho)^2 T).
\end{align*}
    
\end{proof}

\subsection{Proof of Theorem~\ref{thm:max_gen_upper}}\label{sec:proof_max_gen_upper}
Our population loss upper bound for SA-GD (\cref{thm:max_gen_upper}) are based on algorithmic stability 
(e.g., \cite{bousquet2002stability,hardt2016train}). 
In this section, we revisit the main arguments required for these proofs and establish an algorithmic stability upper bound for first-order methods that minimize the SAER. 
In particular, the stability bounds in this section hold for any algorithm that produces a sequence $\{w_t\}_{t=1}^T$ satisfying $w_{t+1} = w_t - \eta \nabla F_S(w_t + v_t)$, where $\{v_t\}_{t=1}^T$ is a sequence of vectors such that for every $t$, $\|v_t\| \leq \rad$ and $\eta \leq 1/(2\beta)$.

The notion of stability that we consider is on-average-leave-one-out (loo) model stability (e.g., \cite{lei2020fine, schliserman22a}). 
For this definition, we assume without loss of generality that there exists an example $z_0 \in \mathcal{Z}$ for which $f(w,z_0)=0$ for all $w$. (Otherwise, we can artificially augment the sample space with such an instance.)
Now, given an i.i.d.~sample $S = (z_1,\ldots,z_n)$, with the corresponding
$F_S$, we define the leave-one-out samples
$
    S^{(i)} = (z_1,\ldots,z_{i-1},z_0,z_{i+1},\ldots,z_n)
$
for all $i \in [n]$, 
with the corresponding empirical risks:
\begin{align*}
    \forall ~ i \in [n],
    \qquad
    F_{S^{(i)}}
    =
    \frac1n \sum_{z \in S_i} f(w,z)
    =
    \frac1n \sum_{j \neq i} f(w,z_j)
    .
\end{align*}
We can now define the on-average-loo model stability for learning algorithms.\begin{definition}[$\ell_2$-loo-on-average model stability] 
Let $A : \mathcal{Z}^n \to \R^d$ be a learning algorithm. We say that $A$ is $\ell_2$-on-average model $\epsilon$-stable if for any
samples $S$, $S'$,
\begin{equation} \label{epsilon_l2_stab}  
    \frac1n \sum_{i=1}^n { \|A(S)-A(S^{(i)})\|^2 }
    \leq 
    \epsilon
    .
\end{equation}
We will denote  by $\epsilon_{\text{stab}}$ the infimum over all $\epsilon$ for which \cref{epsilon_l2_stab} holds.
\end{definition}
Previous work has shown that an $\epsilon$-leave-one-out stable algorithm achieves good generalization. 
This is formalized in the following lemma from \cite{schliserman22a}.

\begin{lemma}[Lemma 7 from \cite{schliserman22a}]
\label{lem:stabtogen}
Let $A$ be an $\ell_2$-on-average-loo model $\epsilon$-stable learning algorithm. Then, if for every $z$, $f(w,z)$ is convex and $\beta$-smooth with respect to $w$, 
\begin{align*}
    \mathbb E { F(A(S)) }
    \leq
    4\mathbb E \left[ F_S (A(S)) \right]
    + 3\beta \epsilon
    .
\end{align*}
\end{lemma}
We can now state the stability upper bound that we establish. 
It is formalized in the following lemma,

\begin{lemma}
    \label{lem:stab_upper}
Assume that for every $z$, $f(w,z)$ is $\beta$-smooth, convex, non-negative and $\rho$-flat. Let $A$ be an algorithm that given a data set $S$, produce a sequence $\{w_t\}_{t=1}^T$ such that $$
w_{t+1} = w_t - \eta \nabla F_S(w_t+v_t),$$
 where $\{v_t\}_{t=1}^T$ are vectors such that for every $t$$, \|v_t\|\leq \rad$ and $\eta\leq 1/2\beta$. Assume that $A$ returns the averaged iterate $\widehat{w} \coloneqq \frac{1}{T}\sum_{t=1}^T w_t$. Then, $A$ is $\ell_2$-on-average model $\epsilon$-stable with 
$$\epsilon_{\text{stab}}\leq O\left(\eta \beta \rad  ^2  T+ \frac{\beta\eta T}{n^2} + \frac{\beta^2\eta^2 T^2 \max(\rad-\rho,0)^2}{n^2}\right)$$
\end{lemma}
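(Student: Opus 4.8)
The plan is to track the leave-one-out model distance $\delta_t^{(i)} \coloneqq w_t - w_t^{(i)}$, where $\{w_t\}$ and $\{w_t^{(i)}\}$ are the iterates produced by $A$ on $S$ and on $S^{(i)}$, and then transfer the bound to the averaged iterate. First I would use convexity of $\|\cdot\|^2$ to write $\|\widehat{w}-\widehat{w}^{(i)}\|^2 \le \tfrac1T\sum_{t=1}^T\|\delta_t^{(i)}\|^2$, so that it suffices to bound $\tfrac1n\sum_i\tfrac1T\sum_t\|\delta_t^{(i)}\|^2$. Abbreviating $u_t \coloneqq w_t+v_t$ and $u_t^{(i)} \coloneqq w_t^{(i)}+v_t^{(i)}$ and using the identity $\nabla F_S - \nabla F_{S^{(i)}} = \tfrac1n\nabla f(\cdot,z_i)$ (valid since $f(\cdot,z_0)=0$), the two update rules give the recursion
\[
\delta_{t+1}^{(i)} = \delta_t^{(i)} - \eta\bigl(\nabla F_{S^{(i)}}(u_t) - \nabla F_{S^{(i)}}(u_t^{(i)})\bigr) - \tfrac{\eta}{n}\nabla f(u_t,z_i),
\]
with $\delta_1^{(i)}=0$. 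I would expand $\|\delta_{t+1}^{(i)}\|^2$ and control the three effects appearing in it separately.

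Writing $A_t \coloneqq \nabla F_{S^{(i)}}(u_t)-\nabla F_{S^{(i)}}(u_t^{(i)})$, the terms quadratic in $A_t$ are handled by co-coercivity of the $\beta$-smooth convex $F_{S^{(i)}}$, which gives $\langle u_t-u_t^{(i)}, A_t\rangle \ge \tfrac1\beta\|A_t\|^2$; combined with $\eta\le\tfrac1{2\beta}$, the negative contribution $-\tfrac{2\eta}{\beta}\|A_t\|^2$ dominates the $\eta^2\|A_t\|^2$ term. Since $\delta_t^{(i)} = (u_t-u_t^{(i)}) - (v_t-v_t^{(i)})$, the only discrepancy from a genuine non-expansive gradient step is the perturbation mismatch $v_t-v_t^{(i)}$; I would absorb the resulting cross term $\langle v_t-v_t^{(i)}, A_t\rangle$ into the surviving $-\tfrac{\eta}{\beta}\|A_t\|^2$ via Young's inequality, leaving the benign additive term $\eta\beta\|v_t-v_t^{(i)}\|^2 \le 4\eta\beta\rad^2$. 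Summed over $t$ and averaged over $i$, this produces the first term $\eta\beta\rad^2 T$, which is present even in the perfectly flat case $\rad\le\rho$.

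After this reduction the recursion takes the scalar form $\|\delta_{t+1}^{(i)}\|^2 \le \|\delta_t^{(i)}\|^2 + \tfrac{2\eta}{n}\|\delta_t^{(i)}\|\,\|\nabla f(u_t,z_i)\| + \tfrac{2\eta^2}{n^2}\|\nabla f(u_t,z_i)\|^2 + 4\eta\beta\rad^2$, where the middle term collects the single-example drift $-\tfrac{2\eta}{n}\langle\delta_t^{(i)},\nabla f(u_t,z_i)\rangle$ bounded by its absolute value. Solving this recursion (with $\delta_1^{(i)}=0$) bounds $\max_t\|\delta_t^{(i)}\|^2$ by $O\bigl((\tfrac{\eta}{n}\sum_t\|\nabla f(u_t,z_i)\|)^2 + \tfrac{\eta^2}{n^2}\sum_t\|\nabla f(u_t,z_i)\|^2 + \eta\beta\rad^2 T\bigr)$. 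Here I would apply Cauchy--Schwarz to the first sum (producing a factor $T$), then the self-bounding property $\|\nabla f(u_t,z_i)\|^2 \le 2\beta f(u_t,z_i)$, and finally average over $i$ so that $\tfrac1n\sum_i f(u_t,z_i)=F_S(u_t)$ (legitimate because $u_t$ does not depend on $i$). Crucially, the running sum $\sum_t F_S(u_t)$ is controlled by the optimization guarantee of \cref{lem:gen_regret}: using $F_S(w^\star)=0$ (realizability of the $\rho$-flat minimum), $\sum_t F_S(u_t) \le \tfrac{\|w_1-w^\star\|^2}{\eta} + 4\beta T\max\{\rad-\rho,0\}^2$ (its proof extends to $\eta\le\tfrac1{2\beta}$ up to constants). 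Substituting this, the two pieces of the drift yield exactly $\tfrac{\beta\eta T}{n^2}$ and $\tfrac{\beta^2\eta^2 T^2\max\{\rad-\rho,0\}^2}{n^2}$, completing the bound on $\epsilon_{\mathrm{stab}}$.

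The main obstacle is the perturbation mismatch $v_t\neq v_t^{(i)}$: because the perturbations are selected as maximizers of two different objectives ($F_S$ versus $F_{S^{(i)}}$), the coupled gradient steps are not simultaneously non-expansive, unlike in the standard stability analysis of (S)GD. The resolution is to treat $v_t-v_t^{(i)}$ as an adversarial perturbation of norm at most $2\rad$ and to charge it, through the squared recursion and Young's inequality, a cost of only $\eta\beta\rad^2$ per step rather than the naive $\eta\beta\rad$-per-step incurred in a norm recursion. A secondary difficulty is that no Lipschitz bound is assumed, so the gradient norms along the trajectory cannot be bounded a priori; routing them through self-bounding and the optimization bound of \cref{lem:gen_regret} is what both closes this gap and introduces the flatness-dependent factor $\max\{\rad-\rho,0\}^2$.
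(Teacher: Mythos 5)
Your proposal is correct and follows essentially the same route as the paper's proof: the same decomposition $\nabla F_S = \nabla F_{S^{(i)}} + \tfrac1n\nabla f(\cdot,z_i)$, co-coercivity to cancel the $\|A_t\|^2$ terms under $\eta\le\tfrac1{2\beta}$, Young's inequality charging the perturbation mismatch $v_t-v_t^{(i)}$ a cost of $O(\eta\beta\rad^2)$ per step, the self-bounding property $\|\nabla f\|^2\le 2\beta f$ together with \cref{lem:gen_regret} to control the accumulated single-example gradients, and Jensen's inequality to pass to the averaged iterate. The only divergence is a routine technical choice: the paper bounds the single-example drift term via Young's inequality with parameter $\eta T/n$, yielding a $(1+\tfrac1T)$-multiplicative recursion unrolled at the cost of a constant factor $e$, whereas you keep it as $\tfrac{2\eta}{n}\|\delta_t^{(i)}\|\,\|\nabla f(u_t,z_i)\|$ and solve the resulting quadratic recursion for $\max_t\|\delta_t^{(i)}\|^2$ via Cauchy--Schwarz; both produce the same final bound.
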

The proof of \cref{lem:stab_upper} appears in \cref{sec:proof_omitted_max_gen_upper}. We will now prove \cref{thm:max_gen_upper}.

\begin{proof} [of \cref{thm:max_gen_upper}]
By \cref{thm:max_opt}, we know that
\begin{align*}
    F_S\left(\frac{1}{T}\sum_{t=1}^T w_t\right)\leq \fsmax\left(\frac{1}{T}\sum_{t=1}^T w_t\right)
    &\leq
    \frac{\|w_1-w^\star\|^2}{\eta T} + 4\beta\max\{\rad - \rho, 0\}^2
    .
\end{align*}
By \cref{lem:stab_upper}, we know that,  the algorithm is $\ell_2$-on-average model $\rad$-stable with 
$$\epsilon\leq 24\eta \beta \rad  ^2  T+ \frac{96\beta\eta T}{n^2} + \frac{768\beta^2\eta^2 T^2 \max(\rad-\rho,0)^2}{n^2}.$$
By combining both equations with \cref{lem:stabtogen} we get the theorem.
\end{proof}

\subsubsection{Omitted Proofs}\label{sec:proof_omitted_max_gen_upper}

\begin{proof} [of \cref{lem:stab_upper}]
Denote by $\{w_t^{(i)}\}_{t\in [T]}$ the iterates of $S^{(i)}$ and by  $\{v_t^{(i)}\}_{t\in [T]}$ the corresponding sequence of perturbations vectors.
It holds that,
\begin{align*}
    &\|w_{t+1} - w_{t+1}^{(i)}\|^2=\| w_t - w_t^{(i)} - \eta \big(\nabla F_{S}(w_t + v_t) - \nabla F_{S^{(i)}}(w_t^{(i)}+ v_t^{(i)})\big)\|^2 \\
    & \leq \|w_t-w_{t}^{(i)}\|^2 + \underbrace{\eta^2 \| \nabla F_{S}(w_t + v_t) - \nabla F_{S^{(i)}}(w_t^{(i)}+ v_t^{(i)}) \|^2}_{(I)}  \\
    & \hspace{2em} - \underbrace{2\eta \langle \nabla F_{S}(w_t + v_t) - \nabla F_{S^{(i)}}(w_t^{(i)}+v_t^{(i)}), w_t - w_t^{(i)} \rangle}_{(II)} 
\end{align*}
Treating the two terms (I),(II) separately, for (I) it holds by \cref{lem:2L_descent} that,
\begin{align*}
    &\eta^2 \| \nabla F_{S}(w_t + v_t) - \nabla F_{S^{(i)}}(w_t^{(i)}+ v_t^{(i)}) \|^2 \\
    & \leq 2 \eta^2 \|\nabla F_{S^{(i)}}(w_t + v_t) - \nabla F_{S^{(i)}}(w_t^{(i)} + v_t^{(i)})\|^2 + \frac{2\eta^2}{n^2}\|\nabla f(w_t + v_t,z_i)\|^2 \\
    & \leq 2\eta^2 \|\nabla F_{S^{(i)}}(w_t + v_t) - \nabla F_{S^{(i)}}(w_t^{(i)} + v_t^{(i)})\|^2 + \frac{4\eta^2}{n^2}\|\nabla f(w_t + v_t,z_i\|^2
    \\ & \leq 2\eta^2 \|\nabla F_{S^{(i)}}(w_t + v_t) - \nabla F_{S^{(i)}}(w_t^{(i)} + v_t^{(i)})\|^2 + \frac{8\beta\eta^2}{n^2}f(w_t + v_t,z_i).
\end{align*}
For (II), it holds by two uses of Young's inequality that,
\begin{align*}
    &-2\eta \langle \nabla F_{S}(w_t + v_t) - \nabla F_{S^{(i)}}(w_t^{(i)}+v_t^{(i)}), w_t - w_t^{(i)} \rangle \\
    =&  -2\eta \langle \nabla F_{S^{(i)}}(w_t + v_t) - \nabla F_{S^{(i)}}(w_t^{(i)}+v_t^{(i)}), w_t - w_t^{(i)} \rangle - \frac{2\eta}{n} \langle \nabla f(w_t + v_t,z_i), w_t -w_t^{(i)} \rangle
    \\
    = & -2\eta \langle \nabla F_{S^{(i)}}(w_t + v_t) - \nabla F_{S^{(i)}}(w_t^{(i)}+v_t^{(i)}), w_t +v_t - w_t^{(i)} - v_t^{(i)} \rangle\\
    &  \quad- \frac{2\eta}{n} \langle \nabla f(w_t + v_t,z_i), w_t -w_t^{(i)} \rangle + 2\eta \langle \nabla F_{S^{(i)}}(w_t + v_t) - \nabla F_{S^{(i)}}(w_t^{(i)}+v_t^{(i)}), v_t - v_t^{(i)} \rangle \\
    \leq & - \frac{2\eta}{\beta} \|\nabla F_{S^{(i)}}(w_t + v_t) - \nabla F_{S^{(i)}}(w_t^{(i)}+v_t^{(i)})\|^2  \\&\quad +\frac{\eta }{\alpha n} \|w_t - w_t^{(i)}\|^2 + \frac{\eta \alpha }{n} \|\nabla f(w_t + v_t,z_i)\|^2\\
    &  \quad + \frac{\eta}{\gamma}\|\nabla F_{S^{(i)}}(w_t + v_t) - \nabla F_{S^{(i)}}(w_t^{(i)}+v_t^{(i)})\| + \eta\gamma \|v_t - v_t^{(i)}\|^2 
    \end{align*}
    By setting $\alpha=\eta T/n$ and using co-coercivity of-gradients of smooth functions, we get,
    \begin{align*}
    &-2\eta \langle \nabla F_{S}(w_t + v_t) - \nabla F_{S^{(i)}}(w_t^{(i)}+v_t^{(i)}), w_t - w_t^{(i)} \rangle 
     \\
     & \leq(\frac{\eta}{\gamma} - \frac{2\eta}{\beta})\|\nabla F_{S^{(i)}}(w_t + v_t) - \nabla F_{S^{(i)}}(w_t^{(i)}+v_t^{(i)})\|^2 +\frac{\eta }{\alpha n} \|w_t - w_t^{(i)}\|^2 \\
     & \hspace{20em} +\frac{2\beta \alpha \eta }{n} f(w_t+v_t,z_i)+4\eta \gamma \rad^2
    \\
     & \leq  (\frac{\eta}{\gamma} - \frac{2\eta}{\beta})\|\nabla F_{S^{(i)}}(w_t + v_t) - \nabla F_{S^{(i)}}(w_t^{(i)}+v_t^{(i)})\|^2 +\frac{1 }{T} \|w_t - w_t^{(i)}\|^2 \\
    & \hspace{20em} +\frac{2\beta \eta^2 T}{n^2} f(w_t+v_t,z_i)+4\eta \gamma \rad^2.
\end{align*}

Averaging over $i\in [n]$, plugging both in, and setting $\gamma = \beta$,$\eta \leq \frac{1}{2\beta}$
\begin{align*}
    &\frac{1}{n}\sum_{i=1}^n\|w_{t+1} - w_{t+1}^{(i)}\|^2 \\&\leq \left(1+\frac{1}{T}\right)\frac{1}{n}\sum_{i=1}^n\|w_{t} - w_{t}^{(i)}\|^2 + \frac{8\beta\eta^2 (T+1)}{n^2} F_S(w_t+v_t)  \\&\qquad+ 4\eta \gamma \rad  ^2+ (2\eta^2 -\frac{2\eta}{\beta} +\frac{\eta}{\gamma}) \| \nabla F_{S^{(i)}}(w_t + v_t) - \nabla F_{S^{(i)}}(w_t^{(i)}+v_t^{(i)})\|^2  
    \\&\leq \left(1+\frac{1}{T}\right)\frac{1}{n}\sum_{i=1}^n\|w_{t} - w_{t}^{(i)}\|^2 + \frac{8\beta\eta^2 (T+1)}{n^2} F_S(w_t+v_t) + 4\eta \beta \rad  ^2
    \\&\leq \frac{e^\frac{1}{T}}{n}\sum_{i=1}^n\|w_{t} - w_{t}^{(i)}\|^2 + \frac{8\beta\eta^2 (T+1)}{n^2} F_S(w_t+v_t) + 4\eta \beta \rad  ^2.
\end{align*}
Now, unrolling the recursion, we get,
\begin{align*}
    \frac{1}{n}\sum_{i=1}^n\|w_{t+1} - w_{t+1}^{(i)}\|^2&\leq \sum_{t=1}^{T} e^{\frac{T-t}{T}} \left( \frac{8\beta\eta^2 (T+1)}{n^2} F_S(w_t+v_t) + 4\eta \beta \rad  ^2\right)
    \\&\leq   \frac{24\beta\eta^2 (T+1)}{n^2} \sum_{t=1}^{T}F_S(w_t+v_t)+ 12\eta \beta \rad  ^2 T.
    \end{align*}
Using \cref{lem:gen_regret}, we get for every $t$ that,
\begin{align*}
     \frac{1}{n}\sum_{i=1}^n\|w_{t+1} - w_{t+1}^{(i)}\|^2&\leq \frac{96\beta\eta^2 T}{n^2} \left(\frac{1}{\eta} + 4 \beta T \max(\rad-\rho,0)^2\right)  +12\eta \beta \rad  ^2 T
     \\&= 12\eta \beta \rad  ^2  T+ \frac{96\beta\eta T}{n^2} + \frac{384\beta^2\eta^2 T^2 \max(\rad-\rho,0)^2}{n^2} .
\end{align*}
By Jensen's inequality and the convexity of squared $\ell_2$ norm, we get that,
\begin{align*}
     \frac{1}{n}\sum_{i=1}^n\|\frac{1}{T}\sum_{i=1}^T w_{t} - \frac{1}{T}\sum_{i=1}^T w_{t}^{(i)}\|^2&\leq  24\eta \beta \rad  ^2  T+ \frac{96\beta\eta T}{n^2} + \frac{768\beta^2\eta^2 T^2 \max(\rad-\rho,0)^2}{n^2} .
\end{align*}
\end{proof}

\section{Proofs for Section~\ref{sec:sam}}\label{sec:proof_sam}
\subsection{Proof of Theorem~\ref{thm:asc_opt_upper}}\label{proof_asc_opt_upper}
 \begin{proof}[of \cref{thm:asc_opt_upper}]
    By the convexity of $F_S$ we know that, for every $t$,
    \begin{align*}
        F_S(w_t+v_t)&\geq F_S(w_t) + \langle\nabla F_S(w_t),v_t\rangle
        \\&= F_S(w_t) + \langle\nabla F_S(w_t),\rad\frac{\nabla F_S (w_t)}{\|\nabla F_S(w_t)\|}\rangle
         \\&= F_S(w_t) + \rad \|\nabla F_S(w_t)\|
        \\&\geq F_S(w_t).
    \end{align*}
    Then, by \cref{lem:gen_regret}, using Jensen inequality, we get,
\begin{align*}
F_S\left(\frac{1}{T}\sum_{t=1}^T w_t\right)
    &\leq \frac{1}{T}\sum_{i=1}^T F_S\left( w_t\right)-F_S(w^\star)
    \\&\leq 
    \frac{1}{T}\sum_{i=1}^T F_S\left( w_t+v_t\right)-F_S(w^\star)
    \\&\leq 
    \frac{\|w_1-w^\star\|^2}{\eta T} + 4\beta\max\{\rad - \rho, 0\}^2.
\end{align*}
\end{proof}

\subsection{Proof of Theorem~\ref{thm:ascent_opt_non_flate}}\label{sec:ascent_opt_non_flate}

\begin{proof} [of \cref{thm:ascent_opt_non_flate}]
Let $f(w)=\frac{1}{2}\max(0,x)^2$.
Its (one-dimensional) derivatives are, for $w\neq 0$,
\[
f'(w)=w,\qquad f''(w)=1,
\]
and for $w<0$, \[
f'(w)=f''(w)=0,
\]
$f$ is a non-negative function.
The convexity is implied by the positivity of $f''$. 
The Lipschitzness is implied by the fact that $|f'(w)|\leq 1$ for every $w\in W$. The smoothness is followed by the fact that $g(w)=\max (0,w)$ is a Lipschitz function as a max function over two Lipschitz functions.
In addition, $f$ is $\rho$-flat 
since $w^\star=-\frac{1}{2}$, holds $f(w^\star +v) =0$ for every $\|v\|\leq \frac{1}{2}$.
Now, let $w_1=0$.
Since $f'(0)=0$, $w_2=w_1=0$ and by induction it follows that SAM satisfies $w_t=0$ for every $t$.
As a result, for any $\tau$, $\widehat{w}_{\tau}=0$,
and, for every $0\leq \rad\leq \frac{1}{2}$, it holds that,
\begin{align*}
    \fsmax(\widehat{w}_{\tau})-\fsmax(w^\star)=\max_{v\leq \rad} \frac{1}{2}\max(0,v)^2-0=\frac{1}{2}\rad^2.
\end{align*}
\end{proof}

\subsection{Proof of Theorem~\ref{thm: gen asc sam}}\label{sec:proof_gen_asc_sam}

\begin{proof}[of \cref{thm: gen asc sam}]
Let $d = T \cdot 2^n + 1$, $\mathcal{Z}= \{0,1\}^{2^n}$, $\mathcal{D}$ to be the uniform distribution over $\mathcal{Z}$. Denote for every $i \in [T]; \; w^{(i)} = w[T\cdot (i-1) + 1 : T \cdot i]$.  Consider the following function:
\begin{align*}
    f(w,z) & = \frac{1}{2}\sum_{i=1}^{2^n} \sum_{j=2}^T z(i) w^{(i)}(j)^2 \\
    & + \frac{1}{2}\sum_{i=1}^{2^n} \sum_{j=2}^T \max\left\{w^{(i)}(j) - \delta_j \left( w^{(i)}(j-1) + \lambda \cdot \mathbbm{1}[j=2]\right),0\right\}^2  \\
    & + \frac{\gamma}{2} \max\{v_z^Tw + \delta_1,0\}^2, 
\end{align*}
where 
$$v_z^{(i)}(j) = \begin{cases}
    0 & j \neq 1 \\
    -\frac{1}{2(d-1)} & i \leq 2^n, \; j = 1 \text{ and } z(i) = 0 \\
    1 & i \leq 2^n, \; j = 1 \text{ and } z(i) = 1\\
    1 & i = 2^n + 1 \text{ and } j = 1
\end{cases},$$
and, 
$$\delta_1 = \frac{\eta \gamma \rad}{2\sqrt{d}-\eta\gamma}, \qquad \lambda = \frac{\rad}{4d(d-1)},\qquad \gamma = \frac{\lambda}{\max\{1, \eta\} (\rad+\delta_1)}.$$
The positive parameters $\{0 <\delta_j \leq 1\}_{j=2}^T$ will be chosen later. We will prove $f$ has the desired properties in the following lemma whose proof is deferred to \cref{sec:proof_omitted_gen_asc_sam}.
\begin{lemma}\label{lemma: f properties ascent sam}
    $f$ defined as defined above is convex, $6$-smooth, $7$-Lipschitz and realizable, meaning $\rho$-flat with $\rho=0$.  
\end{lemma}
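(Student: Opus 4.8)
The plan is to verify the four asserted properties in turn, dispatching convexity and realizability first since they are essentially structural, then using realizability as a lever to obtain the Lipschitz bound almost for free, and finally concentrating the real effort on the smoothness constant. Convexity is immediate from the form of $f$: the summands $\tfrac12 z(i)\,w^{(i)}(j)^2$ are nonnegative multiples of squared coordinates (recall $z(i)\in\{0,1\}$), while every remaining summand is $\tfrac12\max\{\ell(w),0\}^2$ for an affine functional $\ell$ --- namely $w^{(i)}(j)-\delta_j\bigl(w^{(i)}(j-1)+\lambda\mathbbm{1}[j=2]\bigr)$ for the coupling terms and $v_z^{\top}w+\delta_1$ for the last term. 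Since $t\mapsto\tfrac12\max\{t,0\}^2$ is convex and each inner map is affine, every summand is convex, hence so is $f$.

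For realizability --- which, because $\rho=0$, is exactly the required $0$-flatness --- I would exhibit the single point $w^\star=-\delta_1 e_d$, where $e_d$ is the last coordinate on which $v_z$ always equals $1$. There all coordinates $w^{(i)}(j)$ with $i\le 2^n$ vanish, so the quadratic terms are zero; in the coupling terms the $j=2$ argument equals $-\delta_2\lambda\le 0$ and the $j>2$ arguments equal $0$, so all squared ReLUs vanish; and the last term has argument $v_z^{\top}w^\star+\delta_1=-\delta_1+\delta_1=0$. Thus $f(w^\star,z)=0$ for every $z$, while $\|w^\star\|=\delta_1\le 1$, so $w^\star\in W$ is a realizable minimizer. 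As a byproduct, $f\ge 0$ together with $f(w^\star)=0$ and the $C^1$ smoothness of the squared ReLU give $\nabla f(w^\star)=0$.

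The Lipschitz bound then comes cheaply from smoothness: once $f$ is shown to be $6$-smooth and $\nabla f(w^\star)=0$, for any $w\in W$ we get $\|\nabla f(w)\|=\|\nabla f(w)-\nabla f(w^\star)\|\le 6\|w-w^\star\|\le 6(1+\delta_1)\le 7$, using $\|w\|\le 1$ and that $\delta_1$ is a tiny positive quantity (so $\delta_1\le\tfrac16$). This avoids a direct and messy summation of the many gradient contributions from the coupling terms.

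The heart of the argument, and the step I expect to be the main obstacle, is the $6$-smoothness bound. I would write $\nabla^2 f=H_1+H_2+H_3$ according to the three groups of terms and bound each operator norm. $H_1$ is diagonal with entries in $\{0,1\}$, so $\|H_1\|\le 1$. The term $H_3=\gamma\,\mathbbm{1}[\cdot]\,v_zv_z^{\top}$ is rank one with $\|v_z\|^2\le 2^n+1\le d$, and the definitions give $\gamma\le\tfrac1{4d(d-1)}$, so $\|H_3\|\le\gamma\|v_z\|^2\le 1$. The delicate piece is $H_2$, the Hessian of the coupling terms: it is block diagonal across the $2^n$ groups, and within each group it is the tridiagonal matrix $\sum_j \mathbbm{1}_{ij}\,a_{ij}a_{ij}^{\top}$ with $a_{ij}=e_{(i,j)}-\delta_j e_{(i,j-1)}$. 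Each diagonal entry is at most $1+\delta_{j+1}^2\le 2$, and the two off-diagonal entries in any row have magnitudes $\delta_j,\delta_{j+1}\le 1$, so Gershgorin's theorem yields $\|H_2\|\le 2+2=4$; the ReLU activations $\mathbbm{1}_{ij}\in\{0,1\}$ only switch terms off and never worsen this. Subadditivity of the operator norm then gives $\|\nabla^2 f\|\le 1+4+1=6$. The care required is exactly in this tridiagonal/Gershgorin estimate and in checking that the prescribed values of $\lambda,\gamma,\delta_1$ force $\|H_3\|\le 1$ and $\delta_1\le\tfrac16$, which is what pins the constants to the claimed $6$ and $7$.
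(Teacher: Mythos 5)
Your proof is correct and reaches the same constants, but two of its four steps take a genuinely different route from the paper's. The shared parts: the three-way decomposition of $f$, the convexity argument (affine precomposition with $t\mapsto\tfrac12[t]_+^2$), and the realizability witness supported on the last coordinate (yours is $-\delta_1 e_d$, the paper's is $-\tfrac{\lambda}{2}e_d$; both make every squared-ReLU argument nonpositive and both lie in $W$). The differences: (i) for Lipschitzness, the paper bounds $\|\nabla f_1\|,\|\nabla f_2\|,\|\nabla f_3\|$ term by term on the unit ball ($1+5+1\le 7$), whereas you deduce it from smoothness together with $\nabla f(w^\star)=0$ at the unconstrained global minimizer, giving $\|\nabla f(w)\|\le 6\|w-w^\star\|\le 6(1+\delta_1)\le 7$. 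This is cleaner and avoids re-estimating the coupling terms, but it hinges on the numerical check $\delta_1\le\tfrac16$, which you assert without verifying; it does hold, since $\gamma\max\{1,\eta\}(\rad+\delta_1)=\lambda$ gives $\eta\gamma\rad\le\lambda\min\{1,\eta\}\le\tfrac18$ (using $\eta\rad\le\tfrac{1}{2\sqrt T}$ when $\eta\le1$, and $\rad\le\tfrac12$ when $\eta>1$) and $\eta\gamma\le\tfrac{1}{4d(d-1)}$, hence $\delta_1=\frac{\eta\gamma\rad}{2\sqrt d-\eta\gamma}\le\tfrac18$. (ii) For smoothness of the coupling block, the paper writes $f_2(x)=\tfrac12\|(Ax+c)_+\|^2$ and uses that $\nabla\phi$, $\phi(z)=\tfrac12\|z_+\|^2$, is $1$-Lipschitz, reducing everything to $\|B\|^2\le4$; you instead apply Gershgorin to the Hessian $\sum_j\mathbbm{1}_{ij}\,a_{ij}a_{ij}^{\top}$. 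These are the same estimate in disguise, since $\sum_j a_{ij}a_{ij}^{\top}=B^{\top}B$ and dropping indicator terms only decreases the sum in the PSD order, but note that $f$ is only $C^{1,1}$, not $C^2$, so "the Hessian" exists merely almost everywhere; to be airtight you should either phrase the bound via the PSD ordering and an integration along segments, or adopt the paper's chain-rule formulation, which needs no second derivatives at all. Neither caveat is a genuine gap, and your $H_1,H_3$ bounds ($\|H_1\|\le1$, $\|H_3\|\le\gamma\|v_z\|^2\le\tfrac{1}{4(d-1)}\le1$) match the paper's.
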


Since the distribution $\mathcal{D}$ is uniform over $\{0,1\}^{2^n}$, for a random training set $S = \{z_1, \dots, z_n\}$ with probability at least $\frac{1}{e} > \frac{1}{3}$, there exists \textit{exactly one} index $I$ such that for every $z \in S$, $z(I) = 0$. For the rest of the proof, assume this event holds. We will show the dynamics of the algorithm under this assumption in the following lemma which proof is deferred to \cref{sec:proof_omitted_gen_asc_sam}: 
\begin{lemma}\label{lemma: asc sam proof first update}
    Assuming there exists a coordinate $I$ such that $\forall z\in S; \; z(I) = 0$, and $\eta \rad \leq \frac{1}{\sqrt{T}}$, $w_1 = 0$, then there exists $\delta_2 > 0$ such that after running one SAM update on $F_S$,
    \begin{enumerate}
        \item\label{enum:asc-sam-first-update_1} $\forall z \in S;\; v_z^Tw_2 + \delta_1 \leq 0$ 
        \item \label{enum:asc-sam-first-update_2}$\forall i \neq I; \; -\lambda <  w_2^{(i)}(1) < 0$
        \item\label{enum:asc-sam-first-update_3} $\forall i \neq I, \; j\geq 2; \; w^{(i)}_2(j) = 0$
        \item\label{enum:asc-sam-first-update_4} $\forall j \geq 3; \; w_2^{(I)}(j) = 0$
        \item\label{enum:asc-sam-first-update_5} $0 \leq w_2^{(I)}(1) \leq \frac{1}{d}$
        \item\label{enum:asc-sam-first-update_6} $-\frac{1}{d} \leq w_2^{(I)}(2) < 0 $.
    \end{enumerate}
\end{lemma}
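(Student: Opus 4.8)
The plan is to evaluate the single SAM update \cref{asc_update_rule} started at $w_1=0$ explicitly and then read off each of the six claims coordinate by coordinate. The first step is to pin down the ascent direction. At $w=0$ the quadratic term of $f$ has zero gradient, every hinge of the middle (``chain'') term has argument $-\delta_j\lambda\,\mathbbm 1[j=2]\le 0$ and is therefore inactive, while the last term $\tfrac{\gamma}{2}\max\{v_z^\top w+\delta_1,0\}^2$ has argument $\delta_1>0$ and is active. Hence $\nabla F_S(0)=\gamma\delta_1\bar v$ with $\bar v:=\tfrac1n\sum_{k}v_{z_k}$, so the perturbed point is $\tilde w:=\rad\,\bar v/\|\bar v\|$ (well defined since $\bar v(d)=1$). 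I would then record the structure of $\tilde w$: it is supported only on the block-first coordinates $(i,1)$ and on coordinate $d$, with $\tilde w(d)>0$; since every $z\in S$ has $z(I)=0$ we get $\bar v^{(I)}(1)=-\tfrac1{2(d-1)}$ and hence $\tilde w^{(I)}(1)<0$, whereas for $i\ne I$ the ``exactly one $I$'' event forces at least one sample with $z(i)=1$, making $\bar v^{(i)}(1)>0$ and $\tilde w^{(i)}(1)>0$. A short computation bounding $\|\bar v\|\le\sqrt{2^n+1}<2d$ upgrades $\tilde w^{(I)}(1)<0$ to the sharper $\tilde w^{(I)}(1)<-\lambda$, which is exactly what makes the chain hinge fire on block $I$ but nowhere else.

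Next I would compute $\nabla F_S(\tilde w)$ coordinate-wise and set $w_2=-\eta\nabla F_S(\tilde w)$. Writing the hinge argument $g_{i,2}:=\tilde w^{(i)}(2)-\delta_2(\tilde w^{(i)}(1)+\lambda)$, and using that every $(i,j)$ coordinate of $\tilde w$ with $j\ge 2$ vanishes, the quadratic term contributes nothing, all chain hinges with $j\ge 3$ have zero argument, and the only active chain hinge is $g_{I,2}=-\delta_2(\tilde w^{(I)}(1)+\lambda)>0$. This immediately yields claims \ref{enum:asc-sam-first-update_3} and \ref{enum:asc-sam-first-update_4}: the gradient vanishes on all $(i,j)$ with $j\ge 3$, and on $(i,2)$ with $i\ne I$, since there $\tilde w^{(i)}(1)>-\lambda$ keeps the hinge inactive. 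For block $I$, the $(I,2)$ gradient equals $g_{I,2}>0$, so $w_2^{(I)}(2)=-\eta g_{I,2}<0$; the $(I,1)$ gradient is $\delta_2^2(\tilde w^{(I)}(1)+\lambda)$ from the chain plus a nonpositive contribution from the last term (as $v_{z_k}^{(I)}(1)=-\tfrac1{2(d-1)}<0$), hence is negative and $w_2^{(I)}(1)>0$. These give the signs in claims \ref{enum:asc-sam-first-update_5} and \ref{enum:asc-sam-first-update_6}, and the magnitudes $\le 1/d$ follow by taking $\delta_2>0$ small enough, which is precisely the $\delta_2$ whose existence the lemma asserts.

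For the first-coordinate blocks $i\ne I$ (claim \ref{enum:asc-sam-first-update_2}) only the last term contributes, $w_2^{(i)}(1)=-\eta\gamma\,\tfrac1n\sum_k\max\{v_{z_k}^\top\tilde w+\delta_1,0\}\,v_{z_k}^{(i)}(1)$. After checking that $v_{z_k}^\top\tilde w+\delta_1>0$ for every $k$ (so all hinges are active), the sign is governed by the $v_{z_k}^{(i)}(1)$, which equal $1$ on the sample(s) with $z_k(i)=1$ and a negligible $-\tfrac1{2(d-1)}$ otherwise; the positive part dominates, giving $w_2^{(i)}(1)<0$, and a magnitude bound through the definition of $\gamma$ keeps it above $-\lambda$. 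Finally, claim \ref{enum:asc-sam-first-update_1} is where the tuning of $\delta_1$ enters: since $v_z(d)=1$ for all $z$ and $w_2(d)=-\eta\gamma(\bar v^\top\tilde w+\delta_1)=-\eta\gamma(\rad\|\bar v\|+\delta_1)$, the identity $\delta_1(2\sqrt d-\eta\gamma)=\eta\gamma\rad$ makes the coordinate-$d$ contribution to $v_z^\top w_2$ negative enough to dominate the tiny positive contributions from the remaining coordinates, forcing $v_z^\top w_2+\delta_1\le 0$.

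I expect the magnitude estimates in claims \ref{enum:asc-sam-first-update_1} and \ref{enum:asc-sam-first-update_2} to be the main obstacle: they require controlling $\|\bar v\|$ and the inner products $\langle v_{z_k},\bar v\rangle$ (equivalently the second moment $\tfrac1n\sum_k v_{z_k}v_{z_k}^\top$) and then matching these against the delicately chosen scales $\delta_1=\tfrac{\eta\gamma\rad}{2\sqrt d-\eta\gamma}$, $\lambda=\tfrac{\rad}{4d(d-1)}$ and $\gamma=\tfrac{\lambda}{\max\{1,\eta\}(\rad+\delta_1)}$, so that the iterate lands inside the prescribed $O(1/d)$ and $(-\lambda,0)$ windows rather than merely carrying the correct sign. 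The signs and the structural vanishing (claims \ref{enum:asc-sam-first-update_3}--\ref{enum:asc-sam-first-update_6}) are comparatively routine once the ascent direction $\tilde w$ is pinned down.
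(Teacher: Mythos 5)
Your proposal is correct and takes essentially the same route as the paper's proof: at $w_1=0$ only the $\gamma$-hinge term is active, so the ascent point is $\rad\, v_S/\|v_S\|$ with $v_S=\frac1n\sum_k v_{z_k}$; the chain hinge then fires only on block $I$ (because that block's first coordinate drops below $-\lambda$, using $\|v_S\|\le\sqrt{d}$), and the six claims are read off the explicit gradient at the perturbed point, with $\delta_1$ tuned for claim (1), $\gamma$ for claim (2), and $\delta_2$ taken small for claims (5)--(6), exactly as in the paper. The one imprecision is in claim (5): taking $\delta_2$ small does \emph{not} by itself give $w_2^{(I)}(1)\le 1/d$, since as $\delta_2\to 0$ this coordinate tends to the $\delta_2$-independent contribution of the $\gamma$-hinge,
\begin{equation*}
\frac{\eta\gamma}{2(d-1)}\cdot\frac1n\sum_{k=1}^n\bigl[v_{z_k}^\top w_{1+1/2}+\delta_1\bigr]_+,
\end{equation*}
and this persistent term must itself be bounded by $1/d$ using the definitions of $\gamma$ and $\lambda$ --- precisely the step the paper performs (``the last inequality is again from the choice of $\gamma$''); with that addition your argument coincides with the paper's.
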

From this lemma we can conclude that if $w_t^{(I)}(2)$ remains negative throughout the remaining run of the algorithm, none of the coordinates in $w^{(i)}$ where $i \neq I$ will change, and neither will $w^{(i)}(1)$ for every $i$. This means that while  $w_t^{(I)}(2)$ remains negative it suffices to prove the dynamics for the following function: 
\begin{align*}
    g(u) & = \frac{1}{2} \sum_{j=3}^T \max\left\{u(j) - \delta_j u(j-1),0\right\}^2 + \max\{u(2), 0\}^2,
\end{align*}
when we start from $u_2 = -\sigma e_2$ for $\sigma = |w_2^{(I)}(2)| > 0$. The dynamics we will prove for $u[2:T]$ will hold for $w^{(I)}[2:T]$ while the rest of $w$ stays the same as in $w_2$. We will now continue to look at the dynamics of $\{u_t\}_{t=2}^T$. We will have the following lemma whose proof is deferred to \cref{sec:proof_omitted_gen_asc_sam}: 
\begin{lemma}\label{lemma: asc sam proof g}
    There exists a set of positive parameters $\{0 < \delta_t \leq 1\}_{t=3}^T$ such that starting from $u_2 = -\sigma e_2$ will give us the following for $t \geq 4$:
    \begin{enumerate}
    \item $- \sigma \leq u_t(2) \leq 0$
    \item  $u_t(i+1) - \delta_i u_t(i)  \begin{cases}
        \leq 0 & 2 \leq i < t \\
        > 0 & i = t \\
        = 0 & t < i \leq T-1
    \end{cases}$
    \item $-2\eta\rad  \leq u_{t}(t) \leq - \eta \rad$
    \item $-2\eta\rad  \leq  u_{t}(t-1) \leq -\frac{1}{2} \eta \rad$.
\end{enumerate}
\end{lemma}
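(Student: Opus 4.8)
\textbf{Proof proposal for \cref{lemma: asc sam proof g}.}

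The plan is to prove all four invariants simultaneously by induction on $t$, choosing the parameters $\delta_t$ greedily as the induction unfolds. The intuition is that $g$ is engineered so that the ReLU-squared coupling terms $\frac{1}{2}\max\{u(j)-\delta_j u(j-1),0\}^2$ act as a "ratchet": at each time step exactly one coordinate becomes active (its ReLU argument turns positive), the SAM normalized-ascent perturbation amplifies the small negative value in the previous coordinate into a gradient that pushes the next coordinate down by a controlled amount $\Theta(\eta\rad)$, and the coordinates already visited stay frozen. Concretely, I would first compute $\nabla g(u)$ and the SAM ascent point $u + \rad \nabla g(u)/\|\nabla g(u)\|$ at a generic iterate of the form described by the invariants, then compute the resulting gradient step. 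The key algebraic facts to extract are: (i) at iterate $u_t$, only the terms indexed by $i=t$ contribute a nonzero gradient in the relevant coordinates, because invariant~2 forces the ReLU arguments for $i<t$ to be $\leq 0$ (inactive) and for $i>t$ to be exactly $0$; and (ii) the normalization in the ascent step is what converts the small magnitude of $u_t(t-1)$ (of order $\eta\rad$) into a unit-scale direction, producing the next-coordinate displacement of order $\eta\rad$ that invariants 3 and 4 assert.

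The heart of the induction step is choosing $\delta_{t+1}$. Having established by the inductive hypothesis that $-2\eta\rad \leq u_t(t) \leq -\eta\rad$, I would choose $\delta_{t+1}$ small enough (but in $(0,1]$) so that after the SAM update the coupling term for $i=t+1$ has just turned active, i.e. $u_{t+1}(t+1) - \delta_{t+1} u_{t+1}(t) > 0$, while all earlier coupling terms remain inactive and all later ones remain identically zero. Because $u_t(t)$ is negative and bounded away from zero, the threshold $\delta_{t+1} u_{t+1}(t)$ can be made to sit just below $u_{t+1}(t+1)$; this is where the freedom in $\delta_{t+1}$ is spent. Tracking the update carefully should yield $u_{t+1}(t+1) \in [-2\eta\rad, -\eta\rad]$ and $u_{t+1}(t) \in [-2\eta\rad, -\tfrac12\eta\rad]$, reproducing invariants 3 and 4 at time $t+1$, while invariant 1 (the bound on $u_t(2)$) follows since coordinate $2$ is never revisited after the first step and only relaxes toward $0$ through the $\max\{u(2),0\}^2$ term.

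I would verify the base case $t=4$ directly from the starting point $u_2 = -\sigma e_2$, running two explicit SAM steps to seed the ratchet (the $t=2,3$ steps establish the first active coordinate and the first $\Theta(\eta\rad)$ displacement); the nonstandard starting index in the lemma statement ($t\geq 4$) suggests precisely that two warm-up steps are needed before the clean periodic behavior sets in. The main obstacle I anticipate is controlling the normalization denominator $\|\nabla g(u_t)\|$ uniformly across the induction: I must show it stays bounded above and below by absolute constants (independent of $t$ and of the already-chosen $\delta_j$), since otherwise the per-step displacement $\rad\,\nabla g/\|\nabla g\|$ could drift and break the tight $[-2\eta\rad,-\eta\rad]$ window. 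This requires showing that at each iterate essentially one coordinate dominates the gradient norm and the contributions from frozen coordinates are negligible — which is exactly what invariant~2 is designed to guarantee, so the induction must be set up so that the gradient-norm control and the sign/activity pattern are proved together in a single coupled inductive statement rather than sequentially.
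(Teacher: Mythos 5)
Your proposal follows essentially the same route as the paper's proof: induction on $t$ with the four invariants proved jointly, parameters $\delta_t$ chosen greedily and "small enough" (using that the earlier iterates do not depend on the later $\delta$'s), explicit computation of the gradient, the normalized ascent point and the resulting update, and an explicit warm-up computation of $u_3,u_4$ as the base case. The two mechanisms you single out---exact sparsity of $\nabla g$ forced by the activity pattern, and the normalization turning an $O(\delta)$-magnitude gradient into a unit-scale step---are precisely what the paper exploits.

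However, one step is inverted and, as literally written, would fail. You propose to choose $\delta_{t+1}$ \emph{small} so that after the update ``the coupling term has just turned active, i.e.\ $u_{t+1}(t+1)-\delta_{t+1}u_{t+1}(t)>0$.'' That term, $\max\{u(t+1)-\delta_{t+1}u(t),0\}^2$, is the one just \emph{used} in the step from $u_t$ to $u_{t+1}$, and the induction needs it to turn \emph{inactive}. Indeed, the update gives $u_{t+1}(t+1)=-\eta\bigl(\rad\sqrt{1+\delta_{t+1}^2}-\delta_{t+1}u_t(t)\bigr)\leq-\eta\rad<0$, while $\delta_{t+1}u_{t+1}(t)\to 0$ as $\delta_{t+1}\to 0$; so for small $\delta_{t+1}$ the quantity you want positive tends to $u_{t+1}(t+1)<0$. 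Forcing it positive would need $\delta_{t+1}$ bounded away from zero (roughly $\delta_{t+1}>\tfrac12$, given the $[-2\eta\rad,-\eta\rad]$ windows), which breaks the other invariants and leaves two coupling terms active at once---destroying the single-direction gradient your normalization argument relies on. The correct bookkeeping (as in the paper) is: the freedom in $\delta_{t+1}$ is spent on keeping $u_{t+1}(t)<-\tfrac12\eta\rad$, keeping $u_{t+1}(t+1)\geq-2\eta\rad$, and \emph{deactivating} the just-used term; activation of the \emph{next} term, $\max\{u(t+2)-\delta_{t+2}u(t+1),0\}^2$, is automatic for any $\delta_{t+2}>0$, since $u_{t+1}(t+2)=0$ and $u_{t+1}(t+1)<0$. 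Two smaller corrections: your anticipated obstacle of bounding $\|\nabla g(u_t)\|$ above and below by absolute constants is moot, because by exact sparsity the gradient is a positive scalar times $e_{t+2}-\delta_{t+2}e_{t+1}$ and the normalization cancels that scalar identically, so there is no drift to control; and coordinate $2$ is not relaxed toward $0$ through the $\max\{u(2),0\}^2$ term (which never activates, since $u_t(2)\leq0$ throughout)---its one-time drift toward $0$ comes from the $-\delta_3 e_2$ back-component of the coupling gradient in the step $u_2\to u_3$, after which it is frozen.
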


In the proof of the dynamic of $u$ we did not consider projections, that is because with this dynamic and the assumption that $\eta\rad\leq \frac{1}{2\sqrt{T}}$ means we stay inside the unit ball for the entire algorithm and no projections take place. To see this notice using \cref{lemma: asc sam proof first update,lemma: asc sam proof g} that for every $t\in [T]$:
\begin{align*}
    \|w_t\|^2 \leq \|w_T\|^2 \leq 2(T-1) \cdot 4 \eta^2 \rad^2 + d \cdot \frac{1}{d^2} \leq 4\frac{(T-1)}{4T} + \frac{1}{T \cdot 2^n} \leq 1.
\end{align*}
Concluding we know that for $t = 3, \dots, T$:
$$\forall j \in \{3,\dots, t\} ;\; w_t^{(I)}(j) \leq -\frac{1}{2} \eta \rad.$$
This implies that for a suffix average $\tau\in [T];\; \widehat{w}_{\tau} = \frac{1}{T-\tau+1} \sum_{t=\tau}^T w_t$ we have that for $s \geq \frac{T}{2}$:
\begin{align*}
    \widehat{w}_{\tau}^{(I)}(s) & = \frac{1}{T-\tau+1}\sum_{t = \tau}^T w_t^{(I)}(s) \leq \frac{1}{T-\tau+1}\sum_{t = \max\{\tau, T/2\}}^T w_t^{(I)}(s) \\
    & \leq \frac{T - \max\{\tau, T/2\} +1}{T - \tau + 1} \left(-\frac{1}{2}\eta \rad \right) \leq - \frac{\eta \rad}{4}.
\end{align*}
With probability $\frac{1}{2}$ a new sample $z'$ will have $z'(I) = 1$. This means that for every $\tau \in [T]$: 
$$F(\widehat{w}_{\tau}) - F(w^{\star}) \geq \|\widehat{w}_{\tau}\|^2 \geq \frac{1}{4} \eta^2 \rad^2 \cdot \frac{T}{2} = \Omega(\eta^2\rad^2T).$$
Where we use the fact that $F$ is realizable. This concludes the proof.
\end{proof}

\subsubsection{Omitted Proofs}\label{sec:proof_omitted_gen_asc_sam}

\begin{proof}[of \cref{lemma: f properties ascent sam}]\label{proof: f properties ascent sam}
    We will use the following notation:
\begin{align*}
f(w,z)
&= \underbrace{\frac12\sum_{i=1}^{2^n}\sum_{j=2}^{T} z(i)\,w^{(i)}(j)^2}_{=:f_1(w)} \\
&\quad + \underbrace{\frac12\sum_{i=1}^{2^n}\sum_{j=2}^{T} 
\left[\,w^{(i)}(j)-\delta_j\big(w^{(i)}(j-1)+\lambda\,\mathbbm{1}[j=2]\big)\,\right]_+^{\!2}}_{=:f_2(w)} \\
&\quad + \underbrace{\frac{\gamma}{2}\,[\,v_z^\top w + \delta_1\,]_+^{2}}_{=:f_3(w)} ,
\end{align*}
\paragraph{Convexity.}
Each component is convex:
\begin{itemize}
  \item $f_1$: a nonnegative sum of convex quadratics.
  \item $f_2$: each term is $\tfrac12(\text{affine}(w))_+^2$, convex because $x \mapsto \tfrac12(x_+)^2$ is convex and nondecreasing.
  \item $f_3$: same reasoning as $f_2$.
\end{itemize}
Therefore $f$ is convex.

\paragraph{Lipschitz continuity.}
We will bound the norm of the gradients inside the unite ball.
\begin{itemize}
  \item $f_1$: $\nabla f_1(w)=z(i)\,w^{(i)}(j)$ on each $(i,j)$ with $j\ge2$, hence $\|\nabla f_1(w)\| \le\|z\|_{\infty}\|w\|\leq 1$.
  \item $f_2$: define $r_{i,j}(w)=\left[\,w^{(i)}(j)-\delta_j(w^{(i)}(j-1)+\lambda\mathbbm{1}[j=2])\,\right]$.
  Each term $\tfrac12 [r_{i,j}(w)]_+^2$ contributes gradient supported on $w^{(i)}(j),w^{(i)}(j-1)$ with squared norm $(1+\delta_j^2)[r_{i,j}(w)]_+^2\le2[r_{i,j}(w)]_+^2 \le 2r_{i,j}(w)^2$. Using the fact that $(a-b)^2 \leq 2a^2 + 2b^2$ for any $a,b$, we get:
  \begin{align*}
      \| \nabla f_2(w)\|^2 & =\left\|\sum_{i=1}^{2^n} \sum_{j=2}^T \nabla \left(\frac{1}{2}[r_{i,j}(w)]_+^2\right) \right\|^2 \leq \sum_{i=1}^{2^n} \sum_{j=2}^T \left\|\nabla \left(\frac{1}{2} [r_{i,j}(w)]_+^2 \right)\right\|^2 \leq \sum_{i=1}^{2^n} \sum_{j=2}^T 2r_{i,j}(w)^2 \\
      & =2\sum_{i=1}^{2^n} \sum_{j=2}^T \left( w^{(i)}(j) + \delta_j (w^{(i)}(j-1) + \lambda \mathbbm{1}[j = 2])\right)^2 \\
      & \leq 4\sum_{i=1}^{2^n} \sum_{j=2}^T \left(w^{(i)}(j)^2 + \delta_j^2 (w^{(i)}(j-1)^2 + \lambda \mathbbm{1}[j = 2])^2 \right)\\
      & \leq 4\sum_{i=1}^{2^n} \sum_{j=2}^T \left(w^{(i)}(j)^2 + 2\delta_j^2w^{(i)}(j-1)^2 + 2\delta_j^2\lambda \mathbbm{1}[j = 2])^2 \right) \\
      & \leq 12\|w\|^2 + 8\delta_2^2 \lambda \cdot 2^n \\
      &\leq 12 + 8\cdot 1 \cdot \frac{r}{4d(d-1)} \cdot 2^n \leq 13.
  \end{align*}
  Hence $f_2$ is $4$-Lipschitz

  \item $f_3$: $\nabla f_3(w)=\gamma\,(v_z^\top w+\delta_1)_+\,v_z$, hence $\|\nabla f_3(w)\|_2\le \gamma(\|v_z\|+|\delta_1|)\|v_z\| \leq \frac{1}{4d(d-1)} \cdot (\frac{d}{T}+1) \cdot \frac{d}{T}\leq 1$.
\end{itemize}
Adding the three bounds gives that $f$ is $6$-Lipschitz. 
\paragraph{Smoothness.}
\begin{itemize}
  \item $f_1$: this function's Hessian is diagonal with entries $z(i)$ on coordinates $(i,j)$ with $j\ge2$. Since $z(i) \in \{0,1\}$, $f_1$ is $1$-smooth.
  \item $f_2$: For each $i$, stack the variables as $w^{(i)}\in\mathbb{R}^T$ and define the linear map
  \[
    (Bw^{(i)})_{j-1} \;=\; w^{(i)}(j)-\delta_j\,w^{(i)}(j-1),\qquad j=2,\dots,T,
  \]
  so $B\in\mathbb{R}^{(T-1)\times T}$ has $1$ on the superdiagonal and $-\delta_j$ on the subdiagonal positions that touch it. Let $b\in\mathbb{R}^{T-1}$ encode the constant shift $b_{1}=-\delta_2\,\lambda$ and $b_{k}=0$ for $k\ge2$. Writing $x$ for the full vector that stacks all $w^{(i)}$, we can express
  \[
    f_2(x)=\frac12\sum_{i=1}^{2^n}\big\|\,[Bw^{(i)}+b]_+\,\big\|_2^2
    =\frac12\big\|\,[Ax+c]_+\,\big\|_2^2,
  \]
  where $A$ is block-diagonal with $2^n$ copies of $B$ and $c$ stacks the copies of $b$.
  By the chain rule,
  \[
    \nabla f_2(x)=A^\top\,[Ax+c]_+.
  \]
  Hence, for any $x,y$,
  \begin{align*}
      \|\nabla f_2(x)-\nabla f_2(y)\| & =\big\|A^\top\big([Ax+c]_+-[Ay+c]_+\big)\big\|
    \\
    & \le\|A\|\,\|[Ax+c]_+-[Ay+c]_+\|
    \\
    & \le\|A\|\,\|A(x-y)\| \tag{$[\cdot]_+$ is 1-Lipschitz}
    \le\|A\|^2\,\|x-y\|.
  \end{align*}
  Therefore $f_2$ is $\|A\|^2=\|B\|^2$-smooth. Using $\delta_j\le1$ and $(a-b)^2\le2a^2+2b^2$, for any $\|x\|= 1$:
  \[
    \|Bx\|_2^2=\sum_{j=2}^T\big(x_j-\delta_jx_{j-1}\big)^2
    \;\le\;2\sum_{j=2}^T x_j^2+2\sum_{j=2}^T\delta_j^2 x_{j-1}^2
    \;\le\;4\sum_{j=1}^T x_j^2 \leq 4,
  \]
  so $\|B\|^2\le4$ and consequently $f_2$ is $4$-smooth.

  \item $f_3$: $\nabla f_3(w)=\gamma\,[\,v_z^\top w+\delta_1\,]_+\,v_z$.  
For any $x,y$, 
\begin{align*}
    \|\nabla f_3(x)-\nabla f_3(y)\|
& = \gamma\,\|[\,v_z^\top x+\delta_1\,]_+-[\,v_z^\top y+\delta_1\,]_+\|\,\|v_z\|
\le \gamma\,\|v_z^\top(x-y)\|\,\|v_z\|
\\ & \le \gamma\|v_z\|^2\|x-y\| \leq \frac{d}{4d(d-1)} \|x-y\| \leq \|x-y\| ,
\end{align*}

so $f_3$ is $1$-smooth.

\end{itemize}
Summing gives $f$ is $6$ smooth.

\paragraph{Realizability.}
We can see that for 
$$w^{\star}(i) = \begin{cases}
    0 & i < d \\
    - \frac{\lambda}{2} & i = d
\end{cases}$$
$f(w^{\star}, z) = 0$ for every $z \in \{0,1\}^d$.
\end{proof}

\begin{proof}[of \cref{lemma: asc sam proof first update}]\label{proof: asc sam proof first update} Denote $v_S = \frac{1}{n} \sum_{k=1}^n v_{z_k}$. We will compute the gradient steps explicitly, 
$$\nabla F(w_1) = \frac{1}{n} \sum_{k=1}^n \delta_1 \cdot \gamma v_{z_k} = \delta_1 \gamma v_S.$$
Hence,
$$w_{1 + 1/2} = 0 + \frac{\rad \delta_1\gamma  v_S}{\delta_1\gamma \|v_S\|} = \frac{\rad v_S}{\|v_S\|}.$$
Since $n\geq 2$ it holds that $\frac{1}{2(d-1)} \leq \frac{1}{2n}$. This implies $v_z(i) = 1 \Longrightarrow w_{1+1/2}(i) > 0$.  Hence, for every $z \in S$: 
$$v_z^Tw_{1+1/2} \geq w_1^{(2^n + 1)}(1) - \frac{\rad}{2(d-1)} \cdot  \frac{d-1}{\|v_S\|} = \frac{\rad}{2\|v_S\|} > 0.$$
We can calculate the first SAM update explicitly,
\begin{align*}
    \nabla F_S(w_{1+1/2}) & = \frac{1}{n}\sum_{k=1}^n \gamma \left(v_{z_k} \odot \frac{\rad v_S}{\|v_S\|} + \delta_1\right) \odot v_{z_k}   + \left[- \delta_2\left(\frac{\rad v_S(I)}{\|v_S\|} + \lambda\right)\right]_+(e^{(I)}_2 - \delta_2 e^{(I)}_1) \\
    & = \gamma \left( \frac{\rad v_S}{\|v_S\|} \left[ \frac{1}{n} \sum_{k=1}^n v_{z_k} \odot v_{z_k} \right] + \delta_1 v_S \right) - \delta_2\left(\frac{\rad v_S(I)}{\|v_S\|} + \lambda\right)(e^{(I)}_2 - \delta_2 e^{(I)}_1),
\end{align*}
where the last step is from the fact that: 
$$\frac{v_S(I)}{\|v_S\|} + \lambda = - \frac{\rad}{2(d-1) \|v_S\|} + \lambda \leq - \frac{1}{2(d-1)d} + \frac{\rad}{4d(d-1)} = -\frac{\rad}{4d(d-1)} < 0.$$
Notice that similarly to before, this gradient step guarantees $v_z(i) = 1 \Longrightarrow w_2(i) < 0$. Since $v_S(T \cdot 2^n + 1) = 1$, for every $z\in S$: 
\begin{align*}
    v_{z}^Tw_2 & \leq w_2^{(2^n +1)}(1)\left(1 - \frac{1}{2(d-1)}(d-1)\right) =\frac{1}{2} w_2^{(2^n + 1)}(1) = -\frac{\eta \gamma}{2} \left(\frac{\rad + \delta_1}{\|v_S\|} \right) \\
    & \leq -\frac{\eta \gamma (\rad+\delta_1)}{2\sqrt{d}}< 0 \tag{$\|v_S\| \leq \sqrt{d}$}
\end{align*}
This implies that for every $z \in S$: 
$$v_z^Tw_2 + \delta_1 \leq -\frac{\eta\gamma (\rad + \delta_1)}{2\sqrt{d}} + \delta_1 = \frac{-\eta\gamma\rad + \delta_1(2\sqrt{d} - \eta\gamma)}{2\sqrt{d}} = 0.$$
Where the last step is due to the choice of $\delta_1$ and concludes \cref{enum:asc-sam-first-update_1}.  Furthermore, for every $i \neq I$ we have that:
$$w_2^{(i)}(1) + \lambda \geq - \frac{\gamma (\rad + \delta_1)}{\|v_S\|} + \lambda \geq -\gamma (\rad + \delta_1) + \lambda = 0, $$
where the last step is from the choice of $\gamma$ concluding \cref{enum:asc-sam-first-update_2}. Finally,
\begin{align*}
    w_2^{(I)}(1) & = -\eta\delta_2^2 \left(\frac{\rad v_S(I)}{\|v_S\|}\right) + \eta \gamma \left(\frac{\rad}{8(d-1)^3\|v_S\|}+ \frac{\delta_1}{2(d-1)} \right) \\
    & \leq -\eta\delta_2^2 \left(\frac{\rad v_S(I)}{\|v_S\|}\right) + \frac{1}{4d(d-1)} \left(\frac{1}{8(d-1)^3}+ \frac{1}{2(d-1)} \right),
\end{align*}
where the last inequality is again from the choice of $\gamma$. 
This implies that there exists $\tau_1 > 0$ such that for every $\delta_2 \leq \tau_1$ it holds that $0 \leq w_2^{(I)}(1) \leq \frac{1}{\sqrt{d}}$. Similarly,
\begin{align*}
w_2^{(I)}(2) & = \eta \delta_2 \left(\frac{\rad v_S(I)}{\|v_S\|} + \lambda \right).
\end{align*}
Since this goes to $0$ when $\delta_2$ goes to zero, there exists $\tau_2$ such that for every $0 < \delta_2 \leq \tau_2;\;$ $-\frac{1}{\sqrt{d}} \leq w_2^{(I)}(2) < 0$. Choosing $0 < \delta_2 = \min\{\tau_1, \tau_2,1\}$ concludes \cref{enum:asc-sam-first-update_5,enum:asc-sam-first-update_6}. \cref{enum:asc-sam-first-update_3,enum:asc-sam-first-update_4} hold since these coordinates weren't changed by the update and thus stayed $0$. 
\end{proof}

\begin{proof}[of \cref{lemma: asc sam proof g}]\label{proof: asc sam proof g}
We will show the claim by induction on $t$. 

\paragraph{Base case.} We will start by computing $u_4$.  
Using all we've proved we get:
$$\nabla g(u_2) = -\delta_3\sigma (e_3 - \delta_3 e_2),$$
which gives:
\begin{align*}
    u_{2+1/2} & = u_2 + \frac{\rad \nabla g(u_2)}{\|\nabla g(u_2)\|} = u_2 + \frac{\rad\delta_3}{\delta_3\sqrt{1+ \delta_3^2}}e_3 - \frac{\rad \delta_3^2}{\delta_3\sqrt{ 1+ \delta_3^2}}e_2 \\
    & = u_2 + \frac{\rad}{\sqrt{1+ \delta_3^2}}e_3 - \frac{\rad \delta_3}{\sqrt{ 1+ \delta_3^2}}e_2.
\end{align*}
Thus,
\begin{align*}
    & \nabla g(u_{2+1/2}) = (u_{2+1/2}(3) - \delta_3 u_{2+1/2}(2)) e_{3} -\delta_3 (u_{2+1/2}(3) - \delta_3 u_{2+1/2}(2)) e_{2} \\
    & = \left(\frac{\rad}{\sqrt{1+\delta_3^2}} + \frac{\rad  \delta_3^2}{\sqrt{1+ \delta_3^2}} -\delta_3 u_2(2)\right)e_{3} - \delta_3\left(\frac{\rad}{\sqrt{1+\delta_3^2}} + \frac{\rad  \delta_3^2}{\sqrt{1+ \delta_3^2}} -\delta_3 u_2(2)\right)e_{2} \\
    & = \left(\rad  \sqrt{1+ \delta_3^2} - \delta_3 u_2(2)\right) e_3 - \delta_3 \left(\rad  \sqrt{1+ \delta_3^2} - \delta_3 u_2(2)\right) e_{2}. 
\end{align*}
Finally, 
\begin{align*}
    u_3 = u_2 - \eta \left(\rad  \sqrt{1+ \delta_3^2} - \delta_3 u_2(2)\right) e_3 + \eta \delta_3 \left(\rad  \sqrt{1+ \delta_3^2} - \delta_3 u_2(2)\right) e_{2}. 
\end{align*} 
This gives:
$$- \sigma \leq u_3(2) = -\sigma +  \eta \delta_3 \left(\rad  \sqrt{1+ \delta_3^2} - \delta_3 u_2(2)\right).$$
Importantly $\sigma$ does not depend on $\delta_3$ so this term goes to $-\sigma < 0$ as $\delta_3$ goes to $0$. This means that there exists $\tau_1$ such that for every $\delta_3 \leq \tau_1$ we have that $u_3(2) < 0$. Furthermore,
$$u_{3}(3) = -\eta \left(\rad  \sqrt{1+ \delta_3^2} - \delta_3 u_2(2)\right) \leq -\eta \rad  + \eta \delta_3 u_2(2) \leq -\eta \rad,$$
where the last inequality is from the fact that $u_2(2) \leq 0$. Also since $u_3(3)$ goes to $-\eta \rad$ when $\delta_3$ goes to $0$, there exists $\tau_2$ such that for $\delta_3 \leq \tau_2$: 
$$u_{3}(3) = -\eta \left(\rad  \sqrt{1+ \delta_3^2} - \delta_3 u_2(2)\right) \geq -2\eta \rad.$$
Further,
\begin{align*}
    u_3(3) - \delta_3 u_3(2) &= -\eta \left(\rad  \sqrt{1+ \delta_3^2} - \delta_3 u_2(2)\right) - \delta_3 \left( -\sigma +  \eta \delta_3 \left(\rad  \sqrt{1+ \delta_3^2} - \delta_3 u_2(2)\right)\right).
\end{align*}
Again, this term goes to something strictly negative as $\delta_3$ goes to $0$. This means that there exists $\tau_3$ such that for every $\delta_3 \leq \tau_3$ it holds that $u_3(3) - \delta_3 u_3(2) < 0$. Choosing $\delta_3 = \min \{\tau_1,\tau_2, \tau_3,1\}$ concludes $u_3$. We will now calculate $u_4$. From what we have shown: 
$$\nabla g(u_3) = -\delta_4u_3(3)(e_4 - \delta_4 e_3),$$
which gives:
\begin{align*}
    u_{3+1/2} & = u_3 + \frac{\rad \nabla g(u_3)}{\|\nabla g(u_3)\|} = u_3 + \frac{\rad\delta_4}{\delta_4\sqrt{1+ \delta_4^2}}e_4 - \frac{\rad \delta_4^2}{\delta_4\sqrt{ 1+ \delta_4^2}}e_3 \\
    & = u_3 + \frac{\rad}{\sqrt{1+ \delta_4^2}}e_4 - \frac{\rad \delta_4}{\sqrt{ 1+ \delta_4^2}}e_3.
\end{align*}
Thus,
\begin{align*}
    & \nabla g(u_{3+1/2}) = (u_{3+1/2}(4) - \delta_4 u_{3+1/2}(3)) e_{4} -\delta_4 (u_{3+1/2}(4) - \delta_4 u_{3+1/2}(3)) e_{3} \\
    & = \left(\frac{\rad}{\sqrt{1+\delta_4^2}} + \frac{\rad  \delta_4^2}{\sqrt{1+ \delta_4^2}} -\delta_4 u_3(3)\right)e_{4} - \delta_4 \left(\frac{\rad}{\sqrt{1+\delta_4^2}} + \frac{\rad  \delta_4^2}{\sqrt{1+ \delta_4^2}} -\delta_4 u_3(3)\right)e_{3} \\
    & = \left(\rad  \sqrt{1+ \delta_4^2} - \delta_4 u_3(3)\right) e_4 - \delta_4 \left(\rad  \sqrt{1+ \delta_4^2} - \delta_4 u_3(3)\right) e_{3}. 
\end{align*}
Finally, 
\begin{align*}
    u_4 = u_3 - \eta \left(\rad  \sqrt{1+ \delta_4^2} - \delta_4 u_3(3)\right) e_4 + \eta \delta_4 \left(\rad  \sqrt{1+ \delta_4^2} - \delta_4 u_3(3)\right) e_{3}. 
\end{align*} 
This gives:
$$- 2\eta\rad \leq u_4(3) = u_3(3) +  \eta \delta_4 \left(\rad  \sqrt{1+ \delta_4^2} - \delta_4 u_3(3)\right).$$
Importantly $u_3(3)$ does not depend on $\delta_4$ so this term goes to $u_3(3) < -\eta \rad$ as $\delta_4$ goes to $0$. This means that there exists $\theta_1$ such that for every $\delta_4 \leq \theta_1$ we have that $u_4(3) < -\frac{1}{2} \eta\rad$.
Furthermore,
$$u_{4}(4) = -\eta \left(\rad  \sqrt{1+ \delta_4^2} - \delta_4 u_3(3)\right) \leq -\eta \rad  + \eta \delta_4 u_3(3) \leq -\eta \rad,$$
where the last inequality is from the fact that $u_3(3) \leq 0$. Also since $u_4(4)$ goes to $-\eta \rad$ when $\delta_4$ goes to $0$, there exists $\theta_2$ such that for $\delta_4 \leq \theta_2$: 
$$u_{4}(4) = -\eta \left(\rad  \sqrt{1+ \delta_4^2} - \delta_4 u_3(3)\right) \geq -2\eta \rad.$$
Further,
\begin{align*}
    u_4(4) - \delta_4 u_4(3) &= -\eta \left(\rad  \sqrt{1+ \delta_4^2} - \delta_4 u_3(3)\right) - \delta_4 \left( -u_3(3) +  \eta \delta_4 \left(\rad  \sqrt{1+ \delta_4^2} - \delta_4 u_3(3)\right)\right).
\end{align*}
Again, this term goes to something strictly negative as $\delta_4$ goes to $0$. This means that there exists $\theta_3$ such that for every $\delta_4 \leq \theta_3$ it holds that $u_4(4) - \delta_4 u_4(3) < 0$. Choosing $\delta_4 = \min \{\theta_1,\theta_2, \theta_3,1\}$ concludes $u_4$ and the base case.

\paragraph{Inductive step.} Assume this holds for $t' \leq t$. Notice that from the claim it holds that $u_{t'}$ does not depend on $\delta_t$ for $t' \leq t$. So we can choose $\delta_t$ now using $\{u_{t'}\}_{t' \leq t}$. We will calculate the SAM update for from $u_{t-1}$ to $u_t$ using the inductive assumption: 
$$\nabla g(u_{t-1}) = -\delta_{t} u_t(t)(e_t - \delta_t e_{t-1})$$
which gives:
\begin{align*}
    u_{t-1+1/2} & = u_{t-1} + \frac{\rad \nabla g(u_{t-1})}{\|\nabla g(u_{t-1})\|} = u_{t-1} + \frac{\rad\delta_t}{\delta_{t}\sqrt{1+ \delta_{t}^2}}e_t - \frac{\rad \delta_t^2}{\delta_t\sqrt{ 1+ \delta_t^2}}e_{t-1} \\
    & = u_{t-1} + \frac{\rad}{\sqrt{1+ \delta_t^2}}e_t - \frac{\rad \delta_t}{\sqrt{ 1+ \delta_t^2}}e_{t-1}.
\end{align*}
Thus,
\begin{align*}
    \nabla g(u_{t-1+1/2}) &= (u_{t-1+1/2}(t) - \delta_t u_{t-1+1/2}(t-1)) (e_{t} - \delta_t e_{t-1})\\
    & = \left(\frac{\rad}{\sqrt{1+\delta_t^2}} + \frac{\rad  \delta_t^2}{\sqrt{1+ \delta_t^2}} -\delta_t u_{t-1}(t-1)\right)\left(e_{t} - \delta_t e_{t-1}\right) \\
    & = \left(\rad  \sqrt{1+ \delta_t^2} - \delta_t u_{t-1}(t-1)\right) e_t - \delta_t \left(\rad  \sqrt{1+ \delta_t^2} - \delta_t u_{t-1}(t-1)\right) e_{t-1}.
\end{align*}
Finally, 
\begin{align*}
    u_t = u_{t-1} - \eta \left(\rad  \sqrt{1+ \delta_t^2} - \delta_t u_{t-1}(t-1)\right) e_t + \eta \delta_t \left(\rad  \sqrt{1+ \delta_t^2} - \delta_t u_3
    {t-1}(t-1)\right) e_{t-1}. 
\end{align*} 
This gives:
$$- 2\eta\rad \leq u_t(t-1) = u_{t-1}(t-1) +  \eta \delta_t \left(\rad  \sqrt{1+ \delta_t^2} - \delta_t u_{t-1}(t-1)\right).$$
Importantly $u_{t-1}(t-1)$ does not depend on $\delta_t$ so this term goes to $u_{t-1}(t-1) < -\eta \rad$ as $\delta_t$ goes to $0$. This means that there eproxists $\theta_1$ such that for every $\delta_t \leq \theta_1$ we have that $u_t(t-1) < -\frac{1}{2} \eta\rad$.
Furthermore,
$$u_{t}(t) = -\eta \left(\rad  \sqrt{1+ \delta_t^2} - \delta_t u_{t-1}(t-1)\right) \leq -\eta \rad  + \eta \delta_t u_{t-1}(t-1) \leq -\eta \rad,$$
where the last inequality is from the fact that $u_{t-1}(t-1) \leq 0$. Also since $u_t(t)$ goes to $-\eta \rad$ when $\delta_4$ goes to $0$, there exists $\theta_2$ such that for $\delta_t \leq \theta_2$: 
$$u_{t}(t) = -\eta \left(\rad  \sqrt{1+ \delta_t^2} - \delta_t u_{t-1}(t-1)\right) \geq -2\eta \rad.$$
Further,
\begin{align*}
    & u_t(t) - \delta_t u_t(t-1) = \\ &-\eta \left(\rad  \sqrt{1+ \delta_t^2} - \delta_t u_{t-1}(t-1)\right) - \delta_t \left( -u_{t-1}(t-1) +  \eta \delta_t \left(\rad  \sqrt{1+ \delta_t^2} - \delta_t u_{t-1}(t-1)\right)\right).
\end{align*}
Again, this term goes to something strictly negative as $\delta_t$ goes to $0$. This means that there exists $\theta_3$ such that for every $\delta_t \leq \theta_3$ it holds that $u_t(t) - \delta_t u_t(t-1) < 0$. Choosing $\delta_t = \min \{\theta_1,\theta_2, \theta_3\}$ concludes $u_t$ and the proof.
\end{proof}

\subsection{Proof of Theorem~\ref{thm:ascent_gen_upper}} \label{sec:proof_ascent_gen_upper}
\begin{proof}[of \cref{thm:ascent_gen_upper}]
The proof is identical to the proof of \cref{thm:max_gen_upper} except for using \cref{thm:asc_opt_upper} instead of \cref{thm:max_opt}.
\end{proof}

\end{document}